\newcommand\Tstrut{\rule{0pt}{2.2ex}}         % = `top' strut
\newcommand{\keypoint}[1]{\noindent\textbf{#1}\quad}
\newtheorem{theorem}{Theorem}[section]
\ificcvfinal\pagestyle{empty}\fi
\begin{document}

%%%%%%%%% TITLE
\title{A Hierarchical Bayesian Model for Deep Few-Shot Meta Learning}
%Few-Shot Meta Learning with Infinite Episodes: A New Bayesian Perspective}
%\title{A New Bayesian Perspective on Few-Shot Meta Learning}

\author{Minyoung Kim$^1$\\
$^1$Samsung AI Center Cambridge, UK\\
{\tt\small mikim21@gmail.com}
% For a paper whose authors are all at the same institution,
% omit the following lines up until the closing ``}''.
% Additional authors and addresses can be added with ``\and'',
% just like the second author.
% To save space, use either the email address or home page, not both
\and
Timothy Hospedales$^{1,2}$\\
$^2$University of Edinburgh, UK\\
{\tt\small t.hospedales@ed.ac.uk}
}

\maketitle
% Remove page # from the first page of camera-ready.
\ificcvfinal\thispagestyle{empty}\fi

%%%%%%%%% ABSTRACT
\begin{abstract}
We propose a novel hierarchical Bayesian model for learning with a large (possibly infinite) number of tasks/episodes, which suits well the few-shot meta learning problem. We consider episode-wise random variables to model episode-specific target generative processes, where these local random variables are governed by a higher-level global random variate. The global variable helps memorize the important information from historic episodes while controlling how much the model needs to be adapted to new episodes in a principled Bayesian manner. Within our model framework, the prediction on a novel episode/task can be seen as a Bayesian inference problem. 
However, a main obstacle in learning with a large/infinite number of local random variables in online nature, %(i.e., previous episodes never revisited), 
is that one is not allowed to store the posterior distribution of the current local random variable for frequent future updates, typical in conventional variational inference. 
%with %(block-coordinate) 
%stochastic gradient descent. 
We need to be able to treat each local variable as a {\em one-time} iterate in the optimization. We propose a Normal-Inverse-Wishart model, for which we show that this one-time iterate optimization becomes feasible due to the approximate closed-form solutions for the local posterior distributions. The resulting algorithm is more attractive than the MAML in that it is not required to maintain computational graphs for the whole gradient optimization steps per episode. Our approach is also different from existing Bayesian meta learning methods in that unlike dealing with a single random variable for the whole episodes, our approach has a hierarchical structure that allows one-time episodic optimization, desirable for principled Bayesian learning with many/infinite tasks. 
The code is available at \url{https://github.com/minyoungkim21/niwmeta}.
\end{abstract}

%%%%%%%%%%%%%%%%%%%%%%%%%%%%%%%%%%%%%%%%%%%%%%%%%%%%%%%%%%%%%%%%%%%%%%%%%%%%%%%
%%%%%%%%%%%%%%%%%%%%%%%%%%%%%%%%%%%%%%%%%%%%%%%%%%%%%%%%%%%%%%%%%%%%%%%%%%%%%%%
\section{Introduction}\label{sec:intro}
Few-shot learning (FSL) aims to emulate the human ability to learn from few examples \cite{lake2015ppi}. It has received substantial and growing interest \cite{wang2020fslSurvey} due to the need to alleviate the notoriously data intensive nature of mainstream supervised deep learning.  Approaches to FSL are all based on some kind of knowledge transfer from a set of plentiful source recognition problems %, %which are assumed to be plentiful, 
to the sparse data target problem of interest. Existing approaches are differentiated in terms of the assumptions they make about what is task agnostic knowledge that can be transferred from the source tasks, and what is task-specific knowledge that should be learned from the sparse target examples. For example, the seminal MAML \cite{maml} and ProtoNets \cite{protonet} respectively assume that the initialization for fine-tuning, or the feature extractor for metric-based recognition should be transferred from source categories.

One of the most principled and systematic ways to model such sets of related problems are hierarchical Bayesian models (HBMs) \cite{gelman2003bda}. The HBM paradigm is widely used in statistics, but has seen relatively less use in deep learning and computer vision, due to the technical difficulty of bringing hierarchical Bayesian modelling to bear on deep learning.  HBMs provide a powerful way to model a set of related problems, by assuming that each problem has its own parameters (e.g, the neural networks that recognize cat vs dog, or car vs bike), but that those problems share a common prior (the prior over such neural networks). Data-efficient learning of the target tasks is then achieved by inferring the prior based on the source tasks, and using it to enhance learning the posterior over the target task parameters. 

A Bayesian learning treatment of FSL would be appealing due to the overfitting resistance provided by Bayesian Occam’s razor \cite{mackay2003book}, as well as the ability to improve calibration of inference  so that the model’s confidence is reflective of its probability of correctness — a crucial property in mission critical applications \cite{ece}. However the limited attempts that have been made to exploit these tools in deep learning have either been incomplete treatments that only model a single Bayesian layer within the neural network \cite{metaqda,gordon2019metaPred}, or else fail to scale up to modern neural architectures \cite{platipus,bmaml}. 

%Our approach is network architecture independent, can potentially work smoothly with any form of backbone networks, including the conventional feature extractor followed by a learnable readout head as well as the set-based architectures (e.g., CNAPs~\cite{cnaps}, FEAT~\cite{feat}, CNP/ANP~\cite{cnp,anp}) that take the entire support set and the query image as input and outputs a query label. 

In this paper we present the first complete hierarchical Bayesian learning algorithm for few-shot deep learning. Our algorithm efficiently learns a prior\footnote{
Precisely speaking, we have a higher-level random variable $\phi$ shared across episodes, and {\em learning a prior} means inferring the posterior $\phi|\{D_i\}$ for all episodic training data $\{D_i\}$. At test time, this posterior serves as a prior for generating network weights $\theta$ that is specific to each test episode. 
} over neural networks during the meta-train phase, and efficiently learns a posterior neural network during each meta-test episode. Importantly, our learning is architecture independent. It can scale up to state of the art backbones including ViTs~\cite{dosovitskiy2020vit}, and works smoothly with any few-shot learning architecture -- spanning simple linear decoders \cite{maml,protonet}, to those based on sophisticated set-based decoders such as FEAT \cite{feat} and CNP\cite{cnp}/ANP\cite{anp}.  We show empirically that our HBM provides improved performance and calibration in all of these cases, as well as providing clear theoretical justification. 

Our analysis also reveals novel links between seminal FSL methods such as ProtoNet~\cite{protonet}, MAML~\cite{maml}, and Reptile~\cite{reptile}, % and OptNets \cite{metaopt,r2d2}, 
all of which are different special cases of our framework despite their very different appearance. Interestingly, despite its close relatedness to MAML-family algorithms, our Bayesian learner admits an efficient closed-form solution to the task-specific and task-agnostic updates that does not require maintaining the computational graph for reverse-mode backpropagation. This provides a novel solution to a famous meta-learning scalability bottleneck. 

In summary, our contributions include: (i) The first complete hierarchical Bayesian treatment of the few-shot deep learning problem, and associated theoretical justification. (ii) An efficient algorithmic learning solution that can scale up to modern architectures, and plug into most existing neural FSL meta-learners. (iii) Empirical results demonstrating improved accuracy and calibration performance on both classification and regression benchmarks.

% \section{Related Work}
% Differentiate \cite{bmaml,platipus,gordon2019metaPred,maml_vs_bmaml,metaqda,gpdkt,grant2018bayesMAML,nguyen2020uncertaintyMAML}. Maybe briefly here and use an extended RW in supplementary to do a more complete job?

%%%%%%%%%%%%%%%%%%%%%%%%%%%%%%%%%%%%%%%%%%%%%%%%%%%%%%%%%%%%%%%%%%%%%%%%%%%%%%%
%%%%%%%%%%%%%%%%%%%%%%%%%%%%%%%%%%%%%%%%%%%%%%%%%%%%%%%%%%%%%%%%%%%%%%%%%%%%%%%
\section{Problem Setup}\label{sec:setup}

%\hl{Need to change the setup slightly, eg, define the few-shot learning problem first, then introduce infinite-task episodic learning perspective? -- may be two subsections: one for defining few-shot learning, the other inf-task Bayesian perspective...}

We consider the {\em episodic few-shot learning} problem, which can be formally stated as follows. Let $p(\mathcal{T})$ be the (unknown) task/episode distribution, where each task $\mathcal{T} \sim p(\mathcal{T})$ is defined as a distribution $p_\mathcal{T}(x,y)$ for data $(x,y)$ where $x$ is input and $y$ is target. By episodic learning, we have a large (possibly infinite) number of episodes during training, $\mathcal{T}_1,\mathcal{T}_2,\dots, \mathcal{T}_N \sim P(\mathcal{T})$ sampled i.i.d., but we only observe a small number of labeled samples from each episode, denoted by 
$D_i\!=\!\{(x^i_j,y^i_j)\}_{j=1}^{n_i} \sim p_{\mathcal{T}_i}(x,y)$, where $n_i\!=\!|D_i|$ is the number of samples in $D_i$. 
The goal of the learner, after observing the training data $D_1,\dots,D_N$ from a large number of different tasks, is to build a predictor $p^*(y|x)$ for novel unseen tasks $\mathcal{T}^*\sim p(\mathcal{T})$. We will often abuse the notation, e.g., $i\sim\mathcal{T}$ refers to the episode $i$ sampled, i.e., $D_i\sim p_{\mathcal{T}_i}(x,y)$ where $\mathcal{T}_i \sim p(\mathcal{T})$.
%
%The {\em few-shot meta learning}~[CITE] is one of the best examples in this setup; in particular, 
At the test time we are allowed to have some hints about the new test task $\mathcal{T}^*$, in the form of a few labeled examples from $\mathcal{T}^*$, also known as the {\em support set}\footnote{
For the episodic training data $D_i$, it is common practice to partition %the episodic training data $D_i$ 
it into two labeled sets, {\em support} and {\em query}, so that we use the support set for adaptation while measuring the quality of the adapted model on the query set to get learning signals. However, we do not explicitly deal with this convention in our derivations, but treat $D_i$ as a whole available training set. %That is, we treat the union of the query and support sets as $D_i$. 
} denoted by $D^*\sim P_{\mathcal{T}^*}(x,y)$. %\footnote{Although in meta learning it is popular to hold out a portion of the training data $D_i$ as a support set (and the rest as a query set), we do not explicitly consider this splitting in training data, but treat $D_i$ as a whole.}. 
%For test data, only a support set is accessible, denoted by $D^*\sim P_{\mathcal{T}^*}(x,y)$.

For ease of exposition and theoretical analysis, we consider {\em infinite} episodes ($N\!\to\!\infty$) observed during training (of course in practice $N$ is large but finite). In Bayesian perspective, the goal is to infer the posterior distribution with the large/infinite number of episodic training data as evidence, that is, $p(y|x,D_{1:N})|_{N\to\infty}$. A major computational challenge is that the large/infinite number of tasks/data cannot be stored, hardly replayed or revisited, which implies that any viable learning algorithm has to be {\em online} %algorithm} 
in nature.

%%%%%%%%%%%%%%%%%%%%%%%%%%%%%%%%%%%%%%%%%%%%%%%%%%%%%%%%%%%%%%%%%%%%%%%%%%%%%%%
%%%%%%%%%%%%%%%%%%%%%%%%%%%%%%%%%%%%%%%%%%%%%%%%%%%%%%%%%%%%%%%%%%%%%%%%%%%%%%%
\section{Main Approach}\label{sec:main}

% %%%%
% \begin{figure}%[t!]
% \vspace{-0.5em}
% \centering
% \begin{tabular}{ccc}
% \includegraphics[trim = -7mm 2mm -7mm 0mm, clip, scale=0.145]{} & 
% \includegraphics[trim = -7mm 2mm -7mm 0mm, clip, scale=0.145]{} & 
% \includegraphics[trim = -7mm 0mm -7mm 0mm, clip, scale=0.145]{}
% \\
% (a) & (b) & (c) 
% %(a) Overall model & (b) Individual episode & (c) Few-shot learning & (d) Continual learning
% \vspace{-0.1em}
% \end{tabular}
% \caption{Graphical models. (a) Plate view of iid episodes. (b) Individual episode data with input $x$ given and only $p(y|x)$ modeled. (c): Few-shot learning as a probabilistic inference problem (shaded nodes $=$ {\em evidences}, red colored nodes $=$ {\em targets} to infer). In (c), $D^*$ denotes the support set for the test episode. Note: infinitely many evidences $D_1,D_2,\dots,D_N,\dots$.
% }
% \label{fig:gm}
% \vspace{-0.5em}
% \end{figure}
% %%%%

%%%%
\begin{figure}%[t!]
\vspace{-0.5em}
\centering
\begin{tabular}{ccc}
\includegraphics[trim = 9mm 2mm 0mm 0mm, clip, scale=0.145]{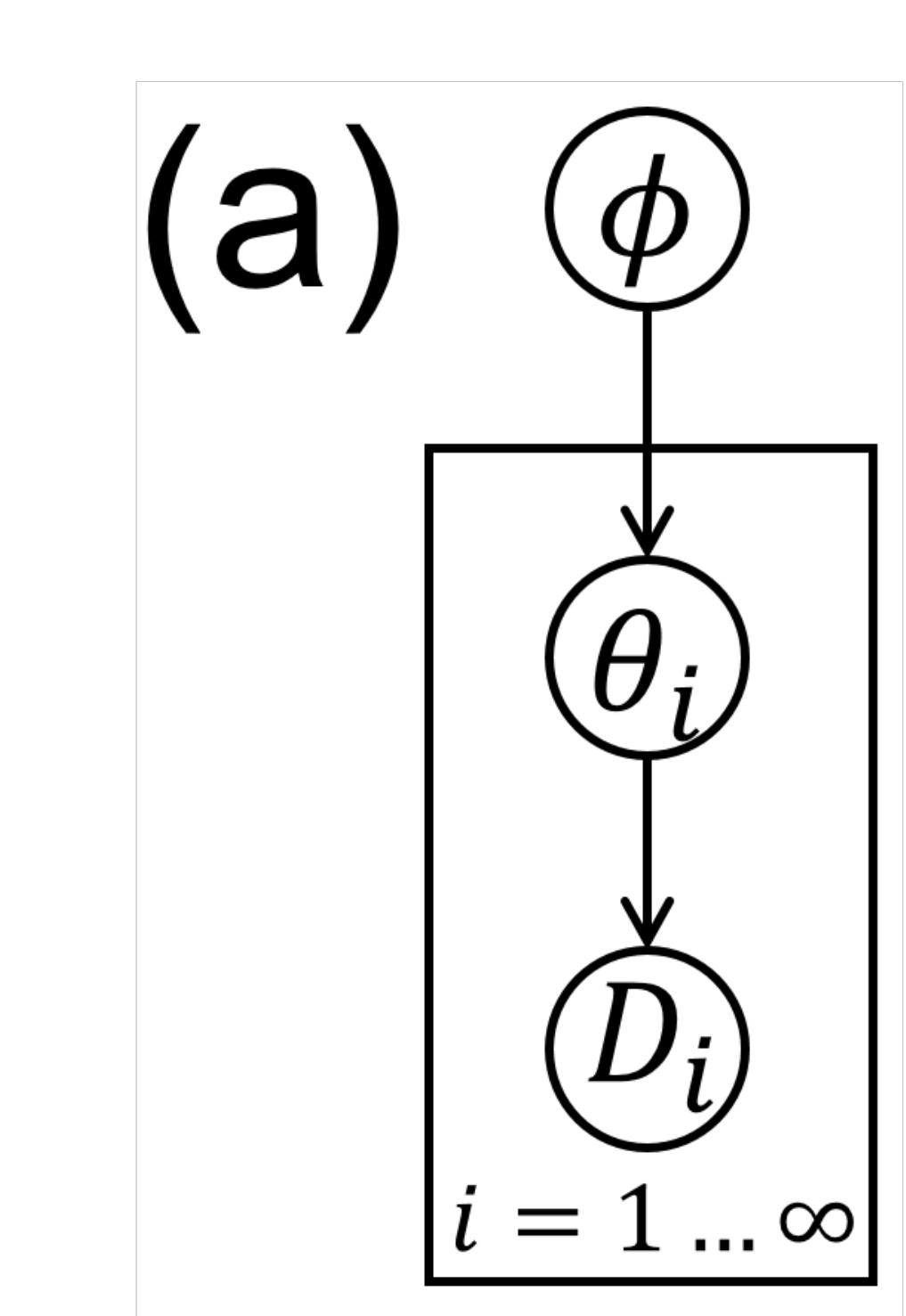} & 
\includegraphics[trim = 1mm 2mm 0mm 0mm, clip, scale=0.145]{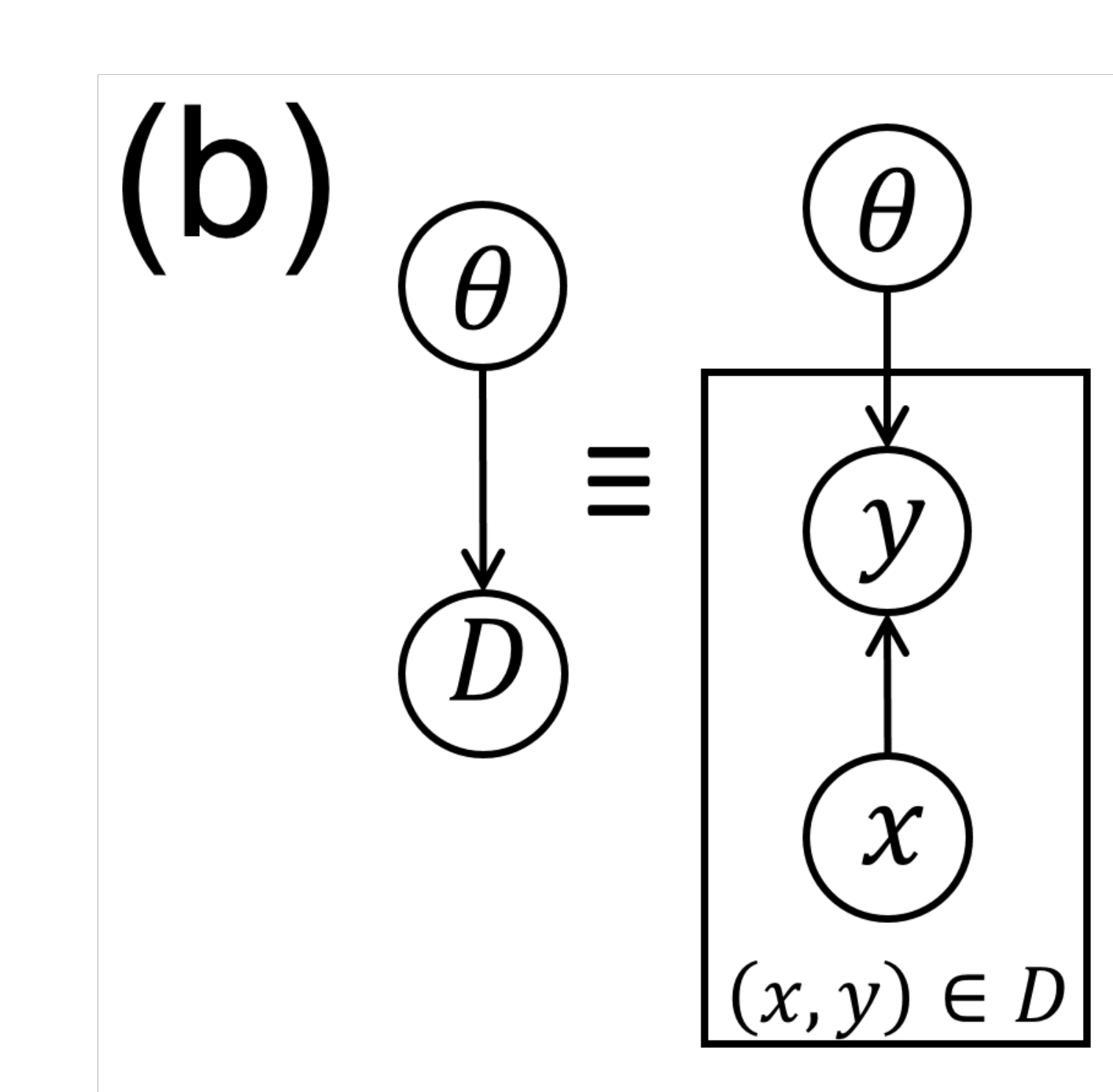} & 
\includegraphics[trim = 1mm 0mm 0mm 0mm, clip, scale=0.145]{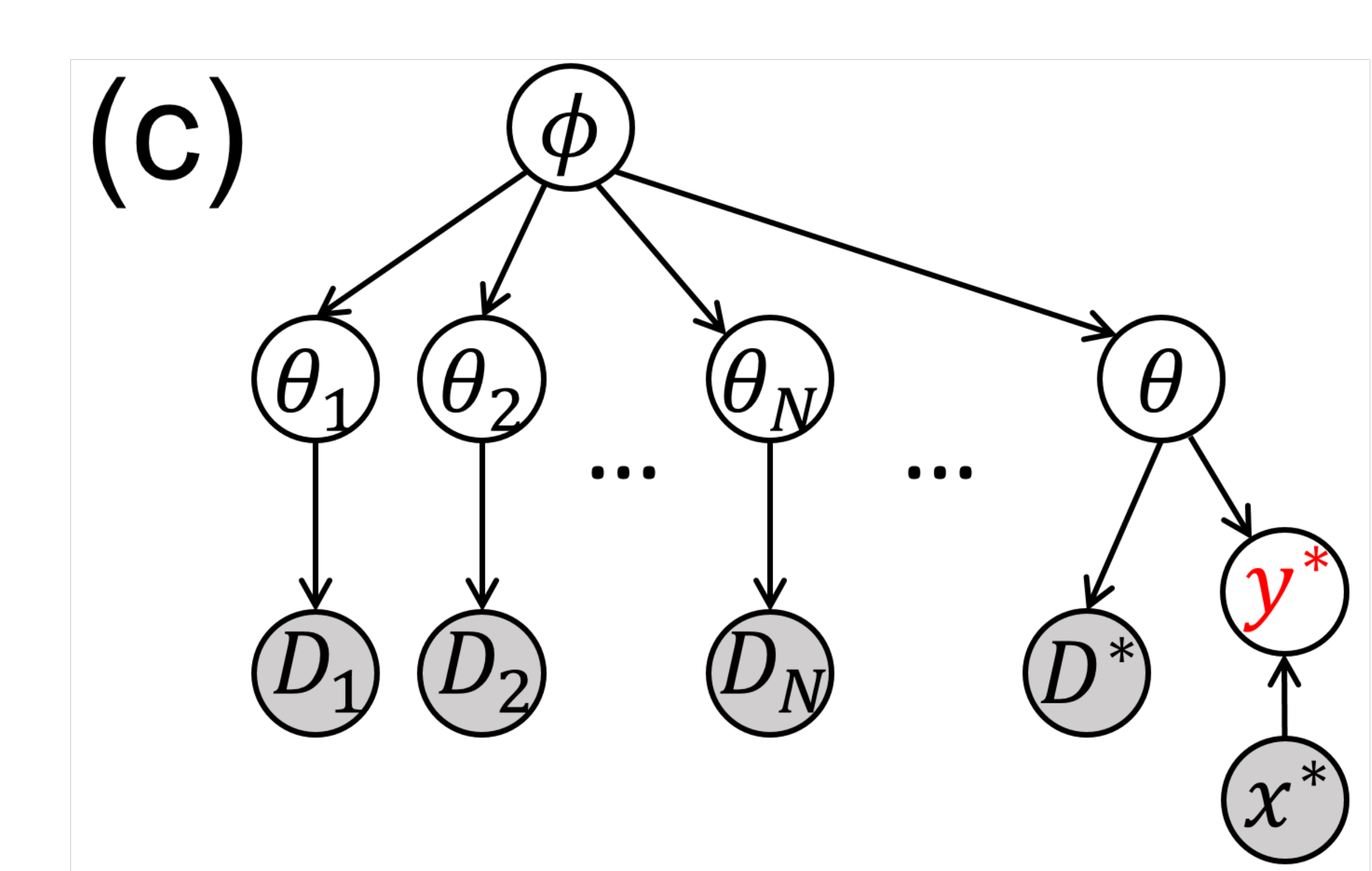}
%(a) Overall model & (b) Individual episode & (c) Few-shot learning & (d) Continual learning
\vspace{-0.2em}
\end{tabular}
\caption{Graphical models. (a) Plate view of iid episodes. (b) Individual episode data with input $x$ given and only $p(y|x)$ modeled. (c): Few-shot learning as a probabilistic inference problem (shaded nodes $=$ {\em evidences}, red colored nodes $=$ {\em targets} to infer). In (c), $D^*$ denotes the support set for the test episode. Note: a large number of (possibly infinitely many) evidences $D_1,D_2,\dots,D_N,\dots$.
}
\label{fig:gm}
\vspace{-1.0em}
\end{figure}
%%%%

We introduce two types of latent random variables, $\phi$ and $\{\theta_i\}_{i=1}^\infty$. Each $\theta_i$, one for each episode $i$, 
is deployed as the network weights for modeling the data $D_i$ ($i=1,\dots,\infty$). Specifically, $D_i$ is generated\footnote{Note that %as per our definition (\ref{eq:client_data_lik}) 
we do not deal with generative modeling of input $x$. Inputs $x$ are always given, and only conditionals $p(y|x)$ are modeled (Fig.~\ref{fig:gm}(b)). %See Fig.~\ref{fig:gm}(b) for the in-depth graphical model diagram.
} by $\theta_i$ as in the likelihood model in (\ref{eq:prior_lik}). 
The variable $\phi$ can be viewed as a globally shared variable that is responsible for linking the individual episode-wise parameters $\theta_i$. We assume conditionally independent and identical priors, $p(\{\theta_i\}_i|\phi) = \prod_i p(\theta_i|\phi)$.  %$p(\theta_{1:N}|\phi) = \prod_{i=1}^N p(\theta_i|\phi)$. 
Thus the prior for the latent variables $(\phi,\{\theta_i\}_{i=1}^\infty)$ is formed in a hierarchical manner. %as (\ref{eq:prior_lik}). 
The model is fully described as:
 %. That is, % by the likelihood model:
%%%%
\begin{align}
&\textrm{(Prior)} \ \ \ \ %p(\theta_1,\dots,\theta_N|\phi) = \prod_{i=1}^N p(\theta_i|\phi), 
p(\phi, \theta_{1:\infty}) = p(\phi) %\prod_{i=1}^\infty p(\theta_i|\phi) \\
{\textstyle\prod}_{i=1}^\infty p(\theta_i|\phi) \\
&\textrm{(Likelihood)} \ \ \ \ p(D_i|\theta_i) = %\prod_{(x,y)\in D_i} p(y|x,\theta_i)
{\textstyle\prod}_{(x,y)\in D_i} p(y|x,\theta_i)
\label{eq:prior_lik}
\end{align}
%%%%
where $p(y|x,\theta_i)$ is a conventional neural network model. %\footnote{Considering the episode-wise different target semantics (e.g., permuted class labels) especially in few-shot learning, we deal with parameter-free classification head such as the nearest centroid classifier (NCC) or the optimal linear ridge regressor. That is, each backbone network is essentially a feature extractor, and the parameters of the network constitute $\theta_i$. To build the parameter-free classification head, we need a labeled dataset (e.g., to estimate the centroids in NCC), and we can either use a small subset of $D_i$ for this purpose, which is often called the support set in few-shot learning. }.
% (e.g., softmax link %likelihood after a neural netowork feed-forward 
%for classification tasks). 
See the graphical model in Fig.~\ref{fig:gm}(a) where the iid episodes are governed by a single random variable $\phi$.

%\hl{TODO: MENTION SUPPORT SET AS IMPLICIT INPUT TO TARGET $y$...}

%For those who are familiar with the federated learning, 
%{\em (Analogy to federated learning) By regarding the episodes as clients/workers in federated learning (FL), the proposed hierarchical Bayesian model is analogous to the FL counterpart. First, the few-shot learning can be seen as a Bayesian inference task as shown in Fig.~\ref{fig:gm}(c), and this corresponds to personalization in FL where the test support set $D^*$ serves as personal training data. Moreover, the continual learning can be treated as an inference problem as depicted in Fig.~\ref{fig:gm}(d), which resembles global prediction in FL.}

Given infinitely many episodic data $\{D_i\}_{i=1}^\infty$ %$D_1,\dots,D_N,\dots$, 
we infer the posterior, $p(\phi,\theta_{1:\infty}|D_{1:\infty}) \propto p(\phi) \prod_{i=1}^\infty p(\theta_i|\phi) p(D_i|\theta_i)$, 
and we adopt  variational inference to approximate it. That is, $q(\phi,\theta_{1:\infty}; L) \approx p(\phi,\theta_{1:\infty}|D_{1:\infty})$ where
%%%%
\begin{align}
%p(\phi,\theta_{1:\infty}|D_{1:\infty}) \approx 
%q(\phi,\theta_1,\dots,\theta_N; L) 
q(\phi,\theta_{1:\infty}; L) 
:= q(\phi; L_0) \cdot \lim_{N\to\infty} %\prod_{i=1}^N q_i(\theta_i; L_i),
{\textstyle\prod}_{i=1}^N q_i(\theta_i; L_i),
\label{eq:q_1}
\end{align}
%%%%
where the variational parameters $L$ consists of $L_0$ (parameters for $q(\phi)$) and $\{L_i\}_{i=1}^\infty$'s (parameters of  $q_i(\theta_i)$'s for episode $i$). 
Note that although $\theta_i$'s are independent across episodes under (\ref{eq:q_1}), they are differently modeled (note the subscript $i$ in notation $q_i$), reflecting different posterior beliefs originating from heterogeneity of episodic data $D_i$'s.

\keypoint{Normal-Inverse-Wishart model.} %Specifically, the prior and variational posterior families are considered as the 
We consider Normal-Inverse-Wishart (NIW) distributions for the prior and variational posterior. First, the prior is modeled as a conjugate form of Gaussian and NIW. With $\phi = (\mu,\Sigma)$,
%%%%
\begin{align}
&p(\phi) = %\mathcal{NIW}(\mu, \Sigma; \Lambda) \nonumber \\
%& \ \ \ \ \ \ \ = 
\mathcal{N}(\mu; \mu_0, \lambda_0^{-1}\Sigma) \cdot  \mathcal{IW}(\Sigma; \Sigma_0, \nu_0), \label{eq:niw_prior_phi} \\
&p(\theta_i|\phi) = \mathcal{N}(\theta_i; \mu, \Sigma), \ \ i=1,\dots,\infty, \label{eq:niw_prior_theta}
\end{align}
%%%%
where $\Lambda=\{\mu_0,\Sigma_0,\lambda_0,\nu_0\}$ is the parameters of the NIW. We do not need to pay attention to the choice of values for $\Lambda$ since $p(\phi)$ has vanishing effect on posterior due to the large/infinite number of evidences as we will see shortly.  
Next, our choice of the variational density family for $q(\phi)$ is the NIW, mainly because it admits closed-form expressions in the ELBO function due to the conjugacy, allowing one-time episodic optimization, as will be shown. %Formally, we let 
%%%%
\begin{align}
q(\phi; L_0) := %\mathcal{NIW}(\phi; \{m_0,V_0,l_0,n_0\}) = 
\mathcal{N}(\mu; m_0, l_0^{-1}\Sigma) \cdot  \mathcal{IW}(\Sigma; V_0, n_0).
\label{eq:niw_q_phi}
\end{align}
%%%%
So, $L_0 = \{m_0,V_0,l_0,n_0\}$, and we restrict $V_0$ to be diagonal. % for computational tractability. 
The density family for $q_i(\theta_i)$'s is chosen as a Gaussian,
%%%%
\begin{align}
q_i(\theta_i; L_i) = \mathcal{N}(\theta_i; m_i, V_i).
\label{eq:niw_q_th}
\end{align}
%%%%
Thus $L_i=\{m_i,V_i\}$. Learning (variational inference) amounts to finding $L_0$ and $\{L_i\}_{1}^\infty$ that makes the %posterior 
approximation $q(\phi,\theta_{1:\infty}; L) \approx p(\phi,\theta_{1:\infty}|D_{1:\infty})$, as tight as possible.

\keypoint{Variational inference.} 
For the finite case with $N$ episodes, it is straightforward to derive the upper bound of the negative marginal log-likelihood (NMLL) as %using the conventional techniques, 
%%%%
\begin{align}
&-\log p(D_{1:N}) \ \leq \  
\textrm{KL}(q(\phi)||p(\phi)) \ + \label{eq:elbo_finite} \\
& \ \ \ \ \ \ {\textstyle\sum}_{i=1}^N \Big( \mathbb{E}_{q_i(\theta_i)}[l_i(\theta_i)] + \mathbb{E}_{q(\phi)}\big[\textrm{KL}(q_i(\theta_i) || p(\theta_i|\phi))\big] \Big) \nonumber
% -\log p(D_{1:N}) \ \leq \ 
% \textrm{KL}(q(\phi)||p(\phi)) + \sum_{i=1}^N \Big( \mathbb{E}_{q_i(\theta_i)}[-\log p(D_i|\theta_i)] +  \mathbb{E}_{q(\phi)}\big[\textrm{KL}(q_i(\theta_i) || p(\theta_i|\phi))\big] \Big)
\end{align}
%%%%
where $l_i(\theta_i)\!=\!-\log p(D_i|\theta_i)$ is the negative training log-likelihood %(or empirical loss) 
of $\theta_i$ in episode $i$.
As $N\!\to\!\infty$, the ultimate objective that we like to minimize is naturally the {\em effective episode-averaged NMLL}, that is,  $\lim_{N\to\infty}-\frac{1}{N} \log p(D_{1:N})$, whose bound is derived from (\ref{eq:elbo_finite}) as:
%%%%
\begin{align}
%\lim_{N\to\infty}-\frac{1}{N} \log p(D_{1:N}) \ \leq \ 
\lim_{N\to\infty} \frac{1}{N}\!{\textstyle\sum}_{i=1}^N\!\Big(\!\mathbb{E}_{q_i(\theta_i)}[l_i(\theta_i)]\!+\!\mathbb{E}_{q(\phi)}\!\big[\textrm{KL}(q_i(\theta_i) || p(\theta_i|\phi))\big]\!\Big)
\nonumber %\label{eq:elbo_infinite}
\end{align}
%%%%
Note that $\frac{1}{N}\textrm{KL}(q(\phi)||p(\phi))$ vanished as $N\!\to\!\infty$.
Since $\lim_{N\to\infty} \frac{1}{N} \sum_{i=1}^N f_i = \mathbb{E}_{i\sim\mathcal{T}}[f_i]$ for any expression $f_i$, the ELBO learning amounts to the following optimization:
%%%%
\begin{align}
\min_{L_0, \{L_i\}_{i=1}^\infty} & \mathbb{E}_{i\sim\mathcal{T}} \Big[ \ \mathbb{E}_{q_i(\theta_i;L_i)}[l_i(\theta_i)] \ \ + \label{eq:elbo_optim_orig} \\ 
& \ \ \ \ \ \ \ \ \ \ \ \ \ \ \mathbb{E}_{q(\phi;L_0)}\big[\textrm{KL}(q_i(\theta_i;L_i) || p(\theta_i|\phi))\big] \ \Big]. \nonumber
\end{align}
%%%%

\keypoint{One-time episodic optimization.} 
Note that (\ref{eq:elbo_optim_orig}) is challenging due to the large/infinite number of optimization variables $\{L_i\}_{i=1}^\infty$ and the online nature of task sampling $i\sim\mathcal{T}$. Applying conventional SGD would simply fail since each $L_i$ will never be updated more than once. Instead, we tackle it by finding the optimal solutions for $L_i$'s for fixed $L_0$, thus effectively representing the optimal solutions as functions of $L_0$, namely $\{L_i^*(L_0)\}_{i=1}^\infty$. Plugging the optimal $L_i^*(L_0)$'s back to (\ref{eq:elbo_optim_orig}) leads to the optimization problem over $L_0$ alone. %, which is now tractable. 
The idea is just like solving: $\min_{x,y} f(x,y) = \min_x f(x, y^*(x))$ where $y^*(x) = \arg\min_y f(x,y)$ with $x$ fixed.

Note that when we fix $L_0$ (i.e., fix $q(\phi)$), the objective (\ref{eq:elbo_optim_orig}) is completely separable over $i$, and we can optimize individual $i$ independently. More specifically, for each $i\geq 1$, 
%%%%
\begin{align}
\min_{L_i} \mathbb{E}_{q_i(\theta_i;L_i)}[l_i(\theta_i)] + \mathbb{E}_\phi \big[\textrm{KL}(q_i(\theta_i;L_i) || p(\theta_i|\phi))\big]
%\mathbb{E}_{q(\phi)}\big[\textrm{KL}(q_i(\theta_i;L_i) || p(\theta_i|\phi))\big]
\label{eq:elbo_optim_indiv_i}
\end{align}
%%%%
%Recall that $L_i=m_i$, and we will derive the optimization problem for $m_i$. %It is not difficult to see that 
As the expected KL term in (\ref{eq:elbo_optim_indiv_i}) admits a closed form due to NIW-Gaussian conjugacy (Supp.~for derivations), 
%%%%
% \begin{align}
% \mathbb{E}_{q(\phi)}\big[\textrm{KL}(q_i(\theta_i) || p(\theta_i|\phi))\big] = \frac{n_0}{2} (m_i - m_0)^\top V_0^{-1} (m_i - m_0) + \textrm{const.~w.r.t.~$m_i$ ($=L_i$)}.
% \label{eq:eqphi_kl}
% \end{align}
%%%%
%Next, the negative log-likelihood term of (\ref{eq:elbo_optim_orig}) can be approximately written as $-\log p(D_i|m_i)$ due to the small  $\epsilon$ in (\ref{eq:niw_q_th}). Combining these results yields the optimization for $m_i$ ($=L_i$) as follows:
we can reduce (\ref{eq:elbo_optim_indiv_i}) to the following optimization for $L_i=(m_i,V_i)$:
%%%%
\begin{align}
%\vspace{-0.5em}
& L_i^*(L_0) := \arg\min_{m_i,V_i} \ \bigg( \mathbb{E}_{\mathcal{N}(\theta_i;m_i,V_i)}[l_i(\theta_i)] - \frac{1}{2} \log |V_i| \ + \nonumber \\ 
& \ \ \ \frac{n_0}{2} (m_i\!-\!m_0)^\top V_0^{-1} (m_i\!-\!m_0) + \frac{n_0}{2} \textrm{Tr}\big(V_i V_0^{-1}\big) \bigg),
\label{eq:elbo_optim_miVi}
%\vspace{-0.5em}
\end{align}
%%%%
with $L_0=\{m_0,V_0,l_0,n_0\}$ fixed.

\keypoint{Quadratic approximation of episodic loss via SGLD.}
To find the closed-form solution $L_i^*(L_0)$ in (\ref{eq:elbo_optim_miVi}), we make quadratic approximation of $l_i(\theta_i)= -\!\log p(D_i|\theta_i)$. In general, $-\!\log p(D_i|\theta)$, as a function of $\theta$, can be written as:
%%%%
\begin{align}
%\vspace{-0.5em}
-\!\log p(D_i|\theta) \approx \frac{1}{2}(\theta\!-\!\overline{m}_i)^\top \overline{A}_i(\theta\!-\!\overline{m}_i) + \textrm{const.},
\label{eq:quad_approx}
%\vspace{-0.5em}
\end{align}
%%%%
for some $(\overline{m}_i, \overline{A}_i)$ that are constant with respect to $\theta$. 
One may attempt to obtain $(\overline{m}_i, \overline{A}_i)$ via Laplace approximation (e.g., the minimizer of $-\!\log p(D_i|\theta)$ for $\overline{m}_i$ and the Hessian at the minimizer for $\overline{A}_i$). %$\overline{m}_i = \arg\min_{\theta} -\!\log p(D_i|\theta)$ and $\overline{A}_i = -\nabla^2 \log p(D_i|\theta)|_{\theta=\overline{m}_i}$). 
However, this involves computationally intensive Hessian computation. Instead, using the fact that the log-posterior $\log p(\theta|D_i)$ equals (up to constant) $\log p(D_i|\theta)$ when we use uninformative prior $p(\theta)\propto 1$, we can obtain samples from the posterior $p(\theta|D_i)$ using MCMC sampling, especially the stochastic gradient Langevin dynamics (SGLD)~\cite{sgld}, and estimate sample mean and precision, which become $\overline{m}_i$ and $\overline{A}_i$, respectively\footnote{This approach is algorithmically very similar to the stochastic weight averaging (SWA)~\cite{swa} 
and follow-up Gaussian fitting (SWAG)~\cite{swag}. %, but more principled. 
}. Note that this amounts to performing several SGD iterations (skipping a few initial for burn-in), and unlike MAML~\cite{maml} no computation graph needs to be maintained since $(\overline{m}_i, \overline{A}_i)$ are constant.
%Now, with this approximation, 
Once we have $(\overline{m}_i, \overline{A}_i)$, the optimization (\ref{eq:elbo_optim_miVi}) admits the closed-form solution (Supplement for derivations),
%%%%
\begin{align}
m_i^*(L_0) &= (\overline{A}_i + n_0 V_0^{-1})^{-1} (\overline{A}_i \overline{m}_i + n_0 V_0^{-1} m_0), \nonumber \\ 
V_i^*(L_0) &= (\overline{A}_i + n_0 V_0^{-1})^{-1}.
\label{eq:miVi_star}
\end{align}
%%%%
%Note that computation in (\ref{eq:miVi_star}) is cheap since we deal with all matrices as diagonal. 
Computation in (\ref{eq:miVi_star}) is cheap since all matrices are diagonal.

\keypoint{Final optimization.} % and algorithm.}
Plugging (\ref{eq:miVi_star}) back to (\ref{eq:elbo_optim_orig}), we have an optimization problem over %a single variable 
$L_0\!=\!\{m_0,V_0,l_0,n_0\}$ alone, which can be %succinctly 
written as (Supplement for full derivations): 
%%%%
\begin{align}
&\min_{L_0} \ \mathbb{E}_{i\sim\mathcal{T}} \Big[ f_i(L_0) + \frac{1}{2} g_i(L_0) + \frac{d}{2 l_0} \Big] \ \ \textrm{s.t.} \label{eq:ultimate_optim} \\
& \ \ \ f_i(L_0) \ = \ \mathbb{E}_{\epsilon\sim \mathcal{N}(0,I)}\Big[l_i\Big(m_i^*(L_0)+V_i^*(L_0)^{1/2}\epsilon\Big)\Big], \nonumber \\ %\label{eq:ultimate_optim_f} \\
& \ \ \ g_i(L_0) \ = \ \log \frac{|V_0|}{|V_i^*(L_0)|} + n_0 \textrm{Tr}\big(V_i^*(L_0)V_0^{-1}\big) \ + \nonumber \\ %\label{eq:ultimate_optim_g} \\
& \ \ \ \ \ \ \ n_0 \big(m_i^*(L_0)\!-\!m_0\big)^\top V_0^{-1} \big(m_i^*(L_0)\!-\!m_0\big) - \psi_d\Big(\frac{n_0}{2}\Big), \nonumber
\end{align}
%%%%
where $\psi_d(\cdot)$ is the multivariate digamma function and $d\!=\!\dim(\theta)$. %To solve (\ref{eq:ultimate_optim}), 
As $l_0$ only appears in the term $\frac{d}{2l_0}$, the optimal value is $l_0^*\!=\!\infty$\footnote{This is compatible with the conjugate Gaussian observation case, where the posterior NIW has $l_0$ incremented from the prior's $l_0$ by the number of observations, which is $\infty$ in our case.}. %We also remove the trace term due to negligible $\epsilon^2$. 
We use SGD to solve (\ref{eq:ultimate_optim}), repeating the steps:
%%%%
\begin{align}
\textrm{1) Sample} \ i\!\sim\!\mathcal{T}. \ \ \textrm{2)} \ L_0 \leftarrow L_0\!-\!\eta \nabla_{L_0} \Big( f_i(L_0)\!+\!\frac{1}{2} g_i(L_0) \Big). \nonumber
% &\ \ \ \ \ \ \textrm{(a) Sample} \ \ i \sim \mathcal{T}. \ \ \ \ \textrm{(b)} \ L_0 \leftarrow L_0 - \eta \nabla_{L_0} f_i(L_0), \ \ \textrm{where} \label{eq:sgd} \\
% & \ \ \ \ \ \ \ \ \ \ \ \ f_i(L_0) \ = \ -\log p(D_i|m_i^*(L_0)) + \frac{1}{2}\log |V_0| - \frac{1}{2} \psi_d\Big(\frac{n_0}{2}\Big) + \frac{\epsilon^2}{2}n_0 \textrm{Tr}V_0^{-1} \ + \nonumber \\
% & \ \ \ \ \ \ \ \ \ \ \ \ \ \ \ \ \ \ \ \ \ \ \ \ \ \ \ \ \ \ \ \ \ \frac{n_0}{2} \big(m_i^*(L_0)-m_0\big)^\top V_0^{-1} \big(m_i^*(L_0)-m_0\big). \label{eq:fi}
\end{align}
%%%%
Note that $\nabla_{L_0} \big( f_i(L_0) + \frac{1}{2} g_i(L_0) \big)$ is an {\em unbiased} stochastic estimate for the gradient of the objective $\mathbb{E}_{i \sim \mathcal{T}} [\cdots]$ in (\ref{eq:ultimate_optim}). Furthermore, our  learning algorithm above (also pseudocode in Alg~\ref{alg:main}) is fully compatible with the online nature of the episodic training. 
%We have described our meta training algorithm (\ref{eq:sgd}), and 
After training, we obtain the learned $L_0$, that is, the posterior $q(\phi; L_0)$. The learned posterior $q(\phi; L_0)$ will be used at the meta test time, %(serving as a prior for generating $\theta$ specific to each test episode), 
where we show in Sec.~\ref{sec:meta_test} that this can be seen as Bayesian inference as well. 

We emphasize that our framework is completely flexible in the choice of the backbone $p(y|x,\theta)$. It could be the popular instance-based network comprised of a feature extractor and a prediction head where the latter can be either a conventional learnable readout head or the parameter-free one like the nearest centroid classifier (NCC) in ProtoNet~\cite{protonet}, i.e., $p(D|\theta)\!=\!p(Q|S,\theta)$ where $D\!=\!S \cup Q$ and $p(y|x,S,\theta)$ is the NCC prediction with support $S$. We can also adopt the set-based networks~\cite{feat,cnp,anp} where $p(y|x,S,\theta)$ itself is modeled by a neural net $y = G(x,S;\theta)$ with input $(x,S)$.

\newcommand\inlineeqno{\stepcounter{equation}\ (\theequation)}
%%%%
\begin{algorithm}[t!]
%\vspace{-0.4em}
  \caption{Our few-shot meta learning algorithm.}
  \label{alg:main}
%\begin{footnotesize}
\begin{small}
\begin{algorithmic}
   %\STATE {\bfseries Input:} Learning rate $\eta$, SGLD hyperparameters
   \STATE {\bfseries Initialize:} $L_0=\{m_0,V_0,n_0\}$ of $q(\phi;L_0)$ randomly. 
   %\STATE {\bfseries Output:} 
   \FOR{episode $i=1,2,\dots$}
      \STATE Perform SGLD iterations on $D_i$ to estimate $(\overline{m}_i,\overline{A}_i)$.
      \STATE Compute the episodic minimizer $L_i^*(L_0)$ from (\ref{eq:miVi_star}).
      \STATE Update $L_0$ by the gradient of $f_i(L_0) + \frac{1}{2} g_i(L_0)$ as in (\ref{eq:ultimate_optim}).
   \ENDFOR
   \STATE {\bfseries Output:} Learned $L_0$.
\end{algorithmic}
\end{small}
%\end{footnotesize}
%\vspace{-0.1em}
\end{algorithm}
%%%%

\subsection{Interpretation}\label{sec:interpretation}
%\textbf{Interpretation}: \hl{TODO...} 

%\hl{We can say neural-process or general set-based learner as a special case? if we adopt the support-conditioned backbone $p(Q_i|\theta_i,S_i)$. It also subsumes ProtoNet (classification) if we consider NCC-driven head on top of the feature, and also the RidgeNet (regression), which we introduce in this paper, that has a ridge regression optimal solution as a support-driven head. }

%In this section we will show that 
%Our model yields a unifying framework that encompasses seemingly unrelated %previous 
%seminal FSL methods into one perspective. 
We show that our framework unifies seemingly unrelated seminal FSL algorithms into one perspective. 

%Our analysis also reveals novel links between seminal FSL methods such as MAML \cite{maml}, Reptile, ProtoNet \cite{protonet} and OptNets \cite{metaopt,r2d2}, all of which are different special cases of our framework despite their very different appearance. Interestingly, despite its close relatedness to MAML-family algorithms, our Bayesian learner admits an efficient closed form solution to the task-specific and task-agnostic updates that does not require maintaining the computational graph for reverse-mode backpropagation. This provides a novel solution to a famous meta-learning scalability bottleneck. 

\keypoint{MAML~\cite{maml} as a special case.} Suppose we consider spiky variational densities, i.e., $V_i\!\to\!0$ (constant). The one-time episodic optimization (\ref{eq:elbo_optim_miVi}) reduces to: $\arg\min_{m_i} l_i(\theta_i)\!+\!R(m_i)$ where $R(m_i)$ is the quadratic penalty of $m_i$ deviating from $m_0$. 
One reasonable solution is to perform a few gradient steps with loss $l_i$, starting from $m_0$ to have small penalty ($R\!=\!0$ initially). That is, $m_i\!\leftarrow\!m_0$ and a few steps of $m_i \leftarrow m_i - \alpha \nabla l_i(m_i)$ to return $m_i^*(L_0)$. Plugging this into (\ref{eq:ultimate_optim}) while disregarding the $g_i$ term, leads to the MAML algorithm. Obviously, the main drawback is $m_i^*(L_0)$ is a function of $m_0 \in L_0$ via a full computation graph of SGD steps, compared to our lightweight closed forms (\ref{eq:miVi_star}).

\keypoint{ProtoNet~\cite{protonet} as a special case.} Again with $V_i\!\to\!0$, if we ignore the negative log-likelihood term in (\ref{eq:elbo_optim_miVi}), then the optimal solution becomes $m_i^*(L_0) = m_0$. If we remove the $g_i$ term, we can solve (\ref{eq:ultimate_optim}) by simple gradient descent with $\nabla_{m_0}(-\log p(D_i|m_0))$. We then adopt %a nearest centroid classifier (NCC) 
the NCC head and regard $m_0$ as sole feature extractor parameters, %i.e., $-\log p(D_i|m_0)\!=\!-\log p(Q_i|S_i,m_0)$ where $D_i\!=\!S_i \cup Q_i$ and $p(y|x,S_i)$ is the NCC prediction with support $S_i$. Clearly this 
which becomes exactly the ProtoNet update. 

\keypoint{Reptile~\cite{reptile} as a special case.} 
Instead, if we ignore all penalty terms in (\ref{eq:elbo_optim_miVi}) and follow our quadratic approximation (\ref{eq:quad_approx}) with $V_i\!\to\!0$, then $m_i^*(L_0) = \overline{m}_i$.  It is constant with respect to $L_0 = (m_0,V_0,n_0)$, and makes the optimization (\ref{eq:ultimate_optim}) very simple: the optimal $m_0$ is the average of $\overline{m}_i$ for all tasks $i$, i.e., $m_0^* = \mathbb{E}_{i \sim \mathcal{T}}[\overline{m}_i]$ (we ignore $V_0$ here). Note that Reptile ultimately finds the exponential smoothing of $m_i^{(k)}$ over $i\sim \mathcal{T}$ where $m_i^{(k)}$ is the iterate after $k$ SGD steps for task $i$. This can be seen as an online estimate of $\mathbb{E}_{i \sim \mathcal{T}}[\overline{m}_i]$. %Thus Reptile is also a special case of our approach. 

\subsection{Meta Test Prediction as Bayesian Inference}\label{sec:meta_test}

At meta test time, we need to be able to predict the target $y^*$ of a novel test input $x^*\sim \mathcal{T}^*$ sampled from the unknown distribution $\mathcal{T}^*\sim p(\mathcal{T})$.  In FSL, %the few-shot learning, 
we have %some hints about the test data distribution $\mathcal{T}^*$, given as a small labeled dataset 
the test support data $D^*=\{(x,y)\}\sim \mathcal{T}^*$. %often referred to as the test support set. 
%In our Bayesian framework, this task can be seen as a Bayesian inference problem in a principled manner. See Fig.~\ref{fig:gm}(c) for the graphical model diagram. % for the scenario. 
%
The test-time prediction can be seen as a posterior inference problem with {\em additional evidence} of the support data $D^*$ (Fig.~\ref{fig:gm}(c)). More specifically, 
%%%%
\begin{align}
p(y^*|x^*, D^*, D_{1:\infty}) = \int p(y^*|x^*, \theta) \ p(\theta|D^*, D_{1:\infty}) \ d\theta.
\nonumber
\end{align}
%%%%
So, it boils down to $p(\theta|D^*,D_{1:\infty})$, the posterior given both the test support data $D^*$ and the entire training data $D_{1:\infty}$. 
Under our hierarchical model, exploiting conditional independence (Fig.~\ref{fig:gm}(c)), we can link it to our trained $q(\phi)$ as: % follows:
%%%%
\begin{align}
&p(\theta|D^*,D_{1:\infty}) 
\approx \int p(\theta|D^*,\phi) \ p(\phi|D_{1:\infty}) \ d\phi \label{eq:personal_1} \\
& \ \ \ \ \ \ \ \ \approx \int p(\theta|D^*,\phi) \ q(\phi) \ d\phi %\label{eq:personal_2} \\
\ \approx \ p(\theta|D^*,\phi^*), \label{eq:personal_3}
\end{align}
%%%%
where in (\ref{eq:personal_1}) we disregard the impact of $D^*$ on the higher-level $\phi$ given the joint evidence, i.e., $p(\phi|D^*,D_{1:\infty})\approx p(\phi|D_{1:\infty})$, due to dominance of $D_{1:\infty}$ compared to smaller $D^*$. 
The last part of (\ref{eq:personal_3}) makes approximation using the mode $\phi^*$ of $q(\phi)$, 
where $\phi^*=(\mu^*,\Sigma^*)$ has a closed form: %expression:
%%%%
\begin{align}
\mu^* = m_0, \ \ \ \ \Sigma^* = \frac{V_0}{n_0+d+2}.
\label{eq:niw_phi*}
\end{align}
%%%%%which is reasonable for %spiky $q(\phi)$ in our two modeling choices for $q(\phi)$ to be discussed in Sec.~\ref{sec:niw} and Sec.~\ref{sec:mix}. 
%exploit the fact that the posterior is spiky around its mode $\phi^*$ for our two modeling choices to be discussed in Sec.~\ref{sec:niw} and Sec.~\ref{sec:mix}.  
%
Next, since $p(\theta|D^*,\phi^*)$ %(\ref{eq:personal_3}) 
involves difficult marginalization $p(D^*|\phi^*) = \int p(D^*|\theta) p(\theta|\phi^*) d\theta$, we adopt  variational inference, introducing a tractable variational distribution $v(\theta) \approx p(\theta|D^*,\phi^*)$. With the Gaussian family as in the training time (\ref{eq:niw_q_th}), i.e., $v(\theta) = \mathcal{N}(\theta; m, V)$ where $(m,V)$ are the variational parameters optimized by ELBO optimization, 
%%%%
\begin{align}
\min_{m,V} \  \mathbb{E}_{v(\theta)}[-\log p(D^*|\theta)] + \textrm{KL}(v(\theta)||p(\theta|\phi^*)).
\label{eq:test_elbo}
\end{align}
%%%%
See Supplement for detailed formulas for (\ref{eq:test_elbo}). 
Once we have the optimized model $v$, our predictive distribution becomes:
%%%%
\begin{align}
p(y^*|x^*,D^*,D_{1:\infty}) \approx 
\frac{1}{S} \sum_{s=1}^{M_S} p(y^*|x^*,\theta^{(s)}), %\ \ \textrm{where} 
\ \ \theta^{(s)} \sim v(\theta),
\nonumber
%\label{eq:personal_predictive}
\end{align}
%%%%
which simply requires feed-forwarding $x^*$ through the sampled networks $\theta^{(s)}$ and averaging.
Our meta-test algorithm is also summarized in the Supplementary Material. 
Note that we have test-time backbone update as per (\ref{eq:test_elbo}), which can make the final $m$ deviated from the learned mean $m_0$. Alternatively, if we drop the first term in (\ref{eq:test_elbo}), the optimal $v(\theta)$ equals $p(\theta|\phi^*)=\mathcal{N}(\theta; m_0, V_0/(n_0+d+2))$. This can be seen as using the learned model $m_0$ with some small random perturbation as a test-time backbone $\theta$.

%\hl{Test-time feature extractor adaptation in general, but if we drop the NLL term in} (\ref{eq:test_elbo}) \hl{then we basically have optimal $v(\theta)$ equal to $p(\theta|\phi^*)=\mathcal{N}(\theta; m_0, V_0/(n_0+d+2))$. So it can be seen as using the learned model $m_0$ with some random perturbation as a test-time backbone $\theta$.}

%%%%%%%%%%%%%%%%%%%%%%%%%%%%%%%%%%%%%%%%%%%%%%%%%%%%%%%%%%%%%%%%%%%%%%%%%%%%%%%
%%%%%%%%%%%%%%%%%%%%%%%%%%%%%%%%%%%%%%%%%%%%%%%%%%%%%%%%%%%%%%%%%%%%%%%%%%%%%%%
\section{Theoretical Analysis}\label{sec:theorem}

%%%%%%%%%%%%%%%%%%%%%%%%%%%%%%%%%%%%%%%%%%%%%%%%%%%
%\subsection{Generalization Error Bounds}

%In this section we provide theoretical analysis for our Bayesian few-shot learning algorithm. 

\keypoint{Generalization error bounds.} 
%We theoretically show how well this optimal model trained on empirical data performs on unseen test data points. %Due to space limit, full details and proofs are described in \ref{appsec:generalization}, and we only state the theorems and remarks here. 
We offer two theorems that upper-bound the generalization error of the model that is averaged over the learned posterior %distribution 
$q(\phi,\theta_{1:\infty})$. 
The first theorem relates the generalization error to the ultimate ELBO loss (\ref{eq:elbo_optim_orig}) that we minimized in our algorithm. We do this by utilizing the recent PAC-Bayes-$\lambda$ bound~\cite{pac_bayes_lambda,pac_bayes_backprop}, a variant of the traditional PAC-Bayes bounds~\cite{pacbayes_mcallester,pacbayes_langford,pacbayes_seeger,pacbayes_maurer}, which circumvents the cumbersome square root or other nonlinear transform of the $\textrm{KL}$ term. 
The second theorem is based on the recent regression analysis technique~\cite{pati18,bai20}.  %for variational Bayes~\cite{pati18,bai20}. 
Without loss of generality we assume $|D_i|\!=\!n$ for all episodes $i$. We let $(q^*(\phi),\{q_i^*(\theta_i)\}_{i=1}^\infty)$ be the optimal solution of (\ref{eq:elbo_optim_orig}). 
We leave the proofs for the two theorems in Supplement. % the Supplement Material.
%
%%%%
\begin{theorem}[PAC-Bayes-$\lambda$ bound] Let $R_i(\theta)$ be the generalization error of model $\theta$ for the task $i$, more specifically, $R_i(\theta) = \mathbb{E}_{(x,y)\sim \mathcal{T}_i}[-\log p(y|x,\theta)]$. The following holds with probability $1\!-\!\delta$ for arbitrary small $\delta>0$:
\begin{align}
\mathbb{E}_{i\sim \mathcal{T}} \mathbb{E}_{q_i^*(\theta_i)}[R_i(\theta_i)] \ \leq \ \frac{2\epsilon^*}{n},
\end{align}
where $\epsilon^*$ is the optimal value of (\ref{eq:elbo_optim_orig}). 
\label{thm:gen_pac_bayes}
\end{theorem}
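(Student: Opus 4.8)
\emph{Proof proposal.} The plan is to obtain the bound at a finite number of episodes $N$ by a single application of the PAC-Bayes-$\lambda$ inequality to the \emph{joint} latent $(\phi,\theta_{1:N})$, and then pass to the limit $N\to\infty$. Condition on the sampled tasks $\mathcal{T}_1,\dots,\mathcal{T}_N$, so that the $Nn$ observed points are mutually independent (points within an episode are iid given that episode's task). Take the data-free model prior $P=p(\phi)\prod_{i=1}^N p(\theta_i|\phi)$ from (\ref{eq:niw_prior_phi})--(\ref{eq:niw_prior_theta}) as the PAC-Bayes prior and the data-dependent $Q=q^*(\phi)\prod_{i=1}^N q_i^*(\theta_i)$ as the posterior; the per-example loss of a draw $(\phi,\theta_{1:N})\sim Q$ on a point $(x,y)$ belonging to episode $i$ is $-\log p(y|x,\theta_i)$ (assumed bounded, say in $[0,1]$; the unbounded/regression case is exactly what the companion regression theorem is for). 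Under $Q$ the empirical risk averages to $\frac{1}{Nn}\sum_i\mathbb{E}_{q_i^*}[l_i(\theta_i)]$ and the population risk, over fresh points of the $N$ training tasks, averages to $\frac{1}{N}\sum_i\mathbb{E}_{q_i^*}[R_i(\theta_i)]$, since $\phi$ marginalizes out of both.

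Next I would invoke the PAC-Bayes-$\lambda$ bound~\cite{pac_bayes_lambda,pac_bayes_backprop} with sample size $m=Nn$: with probability $\ge 1-\delta$, simultaneously for all posteriors and all $\lambda\in(0,2)$,
\begin{align}
\tfrac{1}{N}{\textstyle\sum}_i\mathbb{E}_{q_i^*}[R_i(\theta_i)] \le \frac{\tfrac{1}{Nn}\sum_i\mathbb{E}_{q_i^*}[l_i(\theta_i)]}{1-\lambda/2}+\frac{\mathrm{KL}(Q\|P)+\ln\tfrac{2\sqrt{Nn}}{\delta}}{\lambda(1-\lambda/2)\,Nn}, \nonumber
\end{align}
and then choose $\lambda=1$ to turn both prefactors into $2$. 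The only structural fact required is the chain rule for KL along the hierarchical factorization,
\begin{align}
\mathrm{KL}(Q\|P)=\mathrm{KL}(q^*(\phi)\|p(\phi))+{\textstyle\sum}_{i=1}^N\mathbb{E}_{q^*(\phi)}\big[\mathrm{KL}(q_i^*(\theta_i)\|p(\theta_i|\phi))\big], \nonumber
\end{align}
which is immediate from the independence of the $\theta_i$ under both $P$ and $Q$, and which is precisely why the joint KL reassembles the KL terms appearing in the ELBO.

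Substituting the chain rule, the right-hand side reads $\frac{2}{n}\big[\tfrac{1}{N}\sum_i\big(\mathbb{E}_{q_i^*}[l_i(\theta_i)]+\mathbb{E}_{q^*(\phi)}[\mathrm{KL}(q_i^*(\theta_i)\|p(\theta_i|\phi))]\big)\big]+\frac{2}{Nn}\mathrm{KL}(q^*(\phi)\|p(\phi))+\frac{2}{Nn}\ln\frac{2\sqrt{Nn}}{\delta}$. Letting $N\to\infty$: the last two terms are $O(1/N)$ and vanish; the bracketed empirical average converges, by the same law-of-large-numbers step used to pass from (\ref{eq:elbo_finite}) to (\ref{eq:elbo_optim_orig}), to $\mathbb{E}_{i\sim\mathcal{T}}\big[\mathbb{E}_{q_i^*(\theta_i)}[l_i(\theta_i)]+\mathbb{E}_{q^*(\phi)}[\mathrm{KL}(q_i^*(\theta_i)\|p(\theta_i|\phi))]\big]$, which is exactly the objective (\ref{eq:elbo_optim_orig}) at its minimizer, i.e.\ $\epsilon^*$; meanwhile the left-hand side converges to $\mathbb{E}_{i\sim\mathcal{T}}\mathbb{E}_{q_i^*(\theta_i)}[R_i(\theta_i)]$. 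This yields the claimed $\le 2\epsilon^*/n$.

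The steps that need care, rather than a real obstacle, are: (i) making the $N\to\infty$ passage of a finite-sample high-probability statement rigorous — I would let the finite-$N$ bounds hold for all $N$ simultaneously via a union bound with $\delta_N\propto\delta/N^2$, which only inflates the $\ln(1/\delta)$ term by $O(\ln N)=o(Nn)$ and hence does not affect the limit; (ii) the boundedness of $-\log p(y|x,\theta)$ demanded by PAC-Bayes-$\lambda$, which must be assumed (or replaced by a sub-gamma/unbounded-loss variant); and (iii) verifying that conditioning on $\mathcal{T}_{1:N}$ legitimately removes the within-episode dependence so the $m=Nn$ summands are independent (not identically distributed, which is still enough for the Hoeffding-type moment bound underlying PAC-Bayes-$\lambda$), and that the conditional bound implies the unconditional one. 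Everything else is bookkeeping.
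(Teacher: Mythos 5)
Your proposal matches the paper's proof: both apply PAC-Bayes-$\lambda$ with $\lambda=1$ to the joint posterior/prior over $(\phi,\theta_{1:N})$ with sample size $m=Nn$, decompose $\mathrm{KL}(Q\|P)$ via the chain rule into $\mathrm{KL}(q(\phi)\|p(\phi))+\sum_i\mathbb{E}_{q(\phi)}[\mathrm{KL}(q_i\|p(\cdot|\phi))]$, and send $N\to\infty$ so that the $\mathrm{KL}(q(\phi)\|p(\phi))/(Nn)$ and $\log(2\sqrt{Nn}/\delta)/(Nn)$ terms vanish and the remaining empirical average becomes the objective value $\epsilon^*$. Your caveats (iii) on boundedness of the log-loss and the union bound over $N$ needed to make the limit of a high-probability statement rigorous are points the paper leaves implicit, but they do not alter the argument.
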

%%%%
%\begin{proof}
%See Appendix~\ref{appsec:generalization}.
%\end{proof}
%
%%%%
\begin{theorem}[Bound derived from regression analysis] 
%Assume that the variational density family for $q_i(\theta_i)$ is rich enough to subsume Gaussian. 
Let $d_H^2(P_{\theta_i},P^i)$ be the expected squared Hellinger distance between the true  distribution $P^i(y|x)$ and model's $P_{\theta_i}(y|x)$ for task $i$. Then the following holds with high probability:
\begin{align}
%\mathbb{E}_{i\sim \mathcal{T}} \mathbb{E}_{q_i^*(\theta_i)}[d_H^2(P_{\theta_i}, P^i)] \ \leq \ \frac{C_0}{n} + C_1 \epsilon_n^2 + C_2 (r_n + \lambda^*), \label{eq:gen_bound}
\mathbb{E}_{i\sim \mathcal{T}} \mathbb{E}_{q_i^*(\theta_i)}[d_H^2(P_{\theta_i}, P^i)] \leq O\Big( \frac{1}{n}\!+\!\epsilon_n^2\!+\!r_n \Big) + \lambda^*, \label{eq:gen_bound}
\end{align}
where %$C_\bullet\!>\!0$ are some constant, 
$\lambda^*\!=\!\mathbb{E}_{i\sim \mathcal{T}} [\lambda_i^*]$, $\lambda_i^*\!=\! %\min_{\theta\in\Theta} \max_{x} ||\mathbb{E}_{\theta}[y|x]-\mathbb{E}^i[y|x]||^2$ 
\min_{\theta\in\Theta} ||\mathbb{E}_{\theta}[y|\cdot]-\mathbb{E}^i[y|\cdot]||_\infty^2$
is the lowest possible regression error %within the underlying network 
within $\Theta$, 
% \begin{align}
% r_n = \frac{T(L+1)}{n} \log M + \frac{T}{n} \log\bigg(S_x \sqrt{\frac{n}{T}}\bigg),
% \end{align}
% and $\epsilon_n = \sqrt{r_n} \log^\delta(n)$ for $\delta>1$ constant. 
and $r_n, \epsilon_n$ are decreasing sequences vanishing to $0$ as $n$ increases. 
\label{thm:gen_regr_anal}
\end{theorem}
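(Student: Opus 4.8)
\textbf{Proof proposal for Theorem~\ref{thm:gen_regr_anal}.}

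The plan is to adapt the fractional-posterior / variational regression analysis of~\cite{pati18,bai20} to our two-level hierarchical setting, treating the globally-shared $\phi$ as fixed at its variational optimum and running the per-task argument conditionally, then averaging over $i\sim\mathcal{T}$. First I would recall the core result of the regression-analysis framework: for a single regression task with $n$ i.i.d.\ observations, any variational posterior $q_i$ satisfies a bound of the form $\mathbb{E}_{q_i}[d_H^2(P_{\theta_i},P^i)] \lesssim \tfrac{1}{n}\big(\mathbb{E}_{q_i}[\,l_i(\theta_i)-l_i(\theta_i^\star)\,] + \mathrm{KL}(q_i\,\|\,\pi_i)\big) + \epsilon_n^2 + r_n$, where $\pi_i$ is the prior the variational posterior is compared against, $\epsilon_n$ is the approximation/entropy rate of the function class $\Theta$ (metric entropy of the backbone network — this is where $r_n$ and $\epsilon_n$ come from, and where the neural-network architecture enters), and $\theta_i^\star$ is the best-in-class approximant, whose excess risk is exactly $\lambda_i^*$. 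The key point is that this bound holds for \emph{any} $q_i$, so in particular for our optimal $q_i^*(\theta_i)$ from~(\ref{eq:elbo_optim_orig}).

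Next I would choose the comparison prior to be $\pi_i := p(\theta_i\mid\phi)$ averaged appropriately, so that the right-hand side of the per-task bound becomes precisely the $i$-th summand of the ELBO objective~(\ref{eq:elbo_optim_orig}), namely $\mathbb{E}_{q_i^*(\theta_i)}[l_i(\theta_i)] + \mathbb{E}_{q(\phi)}[\mathrm{KL}(q_i^*(\theta_i)\,\|\,p(\theta_i\mid\phi))]$, up to the additive $l_i(\theta_i^\star)$ term that translates into $\lambda_i^*$. Concretely: bound $d_H^2 \lesssim \tfrac1n(\text{per-task ELBO}_i) + \tfrac1n l_i(\theta_i^\star) + \epsilon_n^2 + r_n$, then convert the negative log-likelihood gap at $\theta_i^\star$ into the squared sup-norm regression error $\lambda_i^*$ via a standard quadratic-loss-to-log-likelihood comparison (Gaussian-noise likelihood, or the relevant link) — this is where the $O(\cdot)$ absorbs constants. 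Then take $\mathbb{E}_{i\sim\mathcal{T}}$ of both sides: since $q_i^*,q^*(\phi)$ jointly minimize~(\ref{eq:elbo_optim_orig}), the averaged per-task ELBO is bounded by $\epsilon^*$ from Theorem~\ref{thm:gen_pac_bayes}'s objective, which itself is controlled; but for this theorem I only need $\mathbb{E}_{i\sim\mathcal{T}}[\text{per-task ELBO}_i]/n = O(1/n)$ under the assumption that the optimized ELBO scales linearly in $n$ (each term $l_i$ is a sum of $n$ log-likelihoods), giving the $O(1/n)$ contribution. Finally set $\lambda^* := \mathbb{E}_{i\sim\mathcal{T}}[\lambda_i^*]$ and collect terms to reach~(\ref{eq:gen_bound}). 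The ``high probability'' qualifier comes from the concentration step inside the single-task lemma (a union/Markov argument on the empirical process), which I would invoke as a black box and then note it transfers to the $\mathcal{T}$-average by linearity of expectation plus the i.i.d.\ task sampling.

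The main obstacle I anticipate is the interchange of the $\phi$-expectation with the single-task regression bound: the framework of~\cite{pati18,bai20} is stated for a fixed prior, whereas our effective prior $p(\theta_i\mid\phi)$ is itself random under $q(\phi)$, so I need to either (a) push the $\mathbb{E}_{q(\phi)}$ inside using convexity of $\mathrm{KL}$ in its second argument and Jensen on the Hellinger bound, or (b) condition on $\phi$, apply the lemma pointwise, and then average — being careful that the metric-entropy / sieve construction for $\Theta$ does not depend on $\phi$ (it doesn't, since $\Theta$ is the fixed backbone parameter space). A secondary subtlety is that the standard regression analysis assumes a genuine fractional power $\alpha<1$ on the likelihood (fractional posterior) to get clean constants; with the full likelihood ($\alpha=1$) one needs the slightly stronger testing/entropy conditions of~\cite{bai20}, so I would state the requisite regularity on $p(y\mid x,\theta)$ (e.g.\ bounded likelihood ratios or sub-Gaussian tails) as a standing assumption and cite it. Everything else — the closed forms for $q_i^*$, the diagonal structure, the SGLD approximation — is irrelevant to this bound, since the per-task lemma is agnostic to \emph{how} $q_i^*$ was obtained.
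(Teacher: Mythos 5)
Your proposal tracks the paper's proof at a high level: both route through the Donsker--Varadhan / fractional-posterior machinery of~\cite{pati18,bai20}, introduce the best-in-class $\theta_i^\star$ so that the misspecification gap becomes $\lambda_i^*$, average over $i\sim\mathcal{T}$, and worry (correctly) about interchanging the $q(\phi)$-expectation with the single-task lemma, which the paper handles by noting the DV bound holds for any fixed $\phi$ before averaging. But there is a genuine gap where you invoke the ELBO-minimizer property. You write that it suffices that ``the optimized ELBO scales linearly in $n$ \ldots giving the $O(1/n)$ contribution,'' but this arithmetic does not work: if the per-task ELBO is $\Theta(n)$ then dividing by $n$ leaves an $O(1)$ term, not $O(1/n)$, and the theorem would fail. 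Plugging $q_i^*$ directly into the single-task bound does not give you rates, because you have no a~priori control on $\mathbb{E}_{q_i^*}[l_i(\theta_i)] + \mathrm{KL}(q_i^*\|\,\cdot\,)$ beyond the fact that it minimizes the objective.

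What is missing is the comparison-posterior step that the paper carries out explicitly: construct $\tilde{q}_i(\theta_i) = \mathcal{N}(\theta_i^\star, \sigma_n^2 I)$ with a \emph{shrinking} variance $\sigma_n^2 = \Theta(1/n)$ (and a matching $\tilde{q}(\phi)$), then use ELBO optimality to replace the right-hand side at $q^*$ by the right-hand side at $\tilde{q}$, i.e.\ $\mathbb{E}_{i}\bigl[\mathbb{E}_{q_i^*}[-\rho] + \mathbb{E}_{q^*(\phi)}\mathrm{KL}(q_i^*\|\,\cdot\,)\bigr] \leq \mathbb{E}_{i}\bigl[\mathbb{E}_{\tilde{q}_i}[-\rho] + \mathbb{E}_{\tilde{q}(\phi)}\mathrm{KL}(\tilde{q}_i\|\,\cdot\,)\bigr]$. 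Only at this specific $\tilde{q}$ is the empirical log-ratio term controllable: Lemma~4.1 of~\cite{bai20} gives $\mathbb{E}_{\tilde{q}_i}[-\rho] \leq C' n(r_n + \lambda_i^*)$ with high probability, and the KL term at $\tilde{q}$ is $n$-bounded, so dividing by $n$ produces $O(1/n) + C_2(r_n + \lambda^*)$. The remaining $C_1\epsilon_n^2$ comes from the separate bound $\log\mathbb{E}_{p(\theta_i|\phi)}[\eta_i] \leq C n\epsilon_n^2$ (Theorem~3.1 of~\cite{pati18}), which you also leave implicit. Without constructing $\tilde{q}$ and invoking these two specific bounds, your sketch cannot produce the claimed rates; it correctly lists the ingredients ($\theta_i^\star$, $\lambda_i^*$, entropy rates, high-probability events) but omits the mechanism that actually wires them together.
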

%%%%
%\begin{proof}
%See Appendix~\ref{appsec:generalization}.
%\end{proof}

% \begin{remark}
% Theorem~\ref{thm:generalisation} implies that the optimal solution of (\ref{eq:elbo}) %our FL-ELBO optimisation problem 
% (attainable by our block-coordinate FL algorithm) is {\em asymptotically optimal}, since the RHS of (\ref{eq:gen_bound}) converges to $0$ as the training data size $n\to \infty$. 
% %This is easy to verify: as $n\to\infty$, $r_n\to 0$ obviously, accordingly $\epsilon_n \to 0$, and 
% %Note that the last term $\frac{1}{N}\sum_i \lambda_i^*$ can be made arbitrarily close to $0$ by increasing the backbone capacity (MLPs as universal function approximators). But practically for fixed $n$, as enlarging the backbone capacity %(i.e., large $T$, $L$, and $M$) also increases $\epsilon_n$ and $r_n$ (definitions of these sequences and details in Appendix~\ref{appsec:generalisation}), it is important to choose the backbone network architecture properly. 
% %Note also that our assumption on the variational density family for $q_i(\theta_i)$ is easily met; for instance, the families of the mixtures of Gaussians adopted in NIW (Sec.~\ref{sec:niw}) and mixture models (Sec.~\ref{sec:mix}) obviously subsume a single Gaussian family. 
% \end{remark}

%%%%%%%%%%%%%%%%%%%%%%%%%%%%%%%%%%%%%%%%%%%%%%%%%%%
%\subsection{Computational Complexity} 

\keypoint{Computational complexity.} 
%We summarize in Table~\ref{tab:complexity} the %training/test 
Although we have introduced a principled Bayesian model/framework for FSL with solid theoretical support, the extra steps introduced in our training/test algorithms appear to be more complicated than simple feed-forward workflows. 
%popular simple feed-forward workflows comprised of a feature extractor and a prediction head. 
To this end, we have analyzed the time complexity of the proposed algorithm %(NIW-Meta) 
contrasted with ProtoNet~\cite{protonet}. For fair comparison, our approach adopts the same NCC head on top of the feature space as ProtoNet. Please find the details in the Supplement Material. Despite seemingly increased complexity in the training/test algorithms, our method incurs only constant-factor overhead compared to the minimal-cost ProtoNet. 
%See also the actual wall clock times on real datasets in Fig.~\ref{fig:complexity}.

% %%%% 
% \begin{table}[t!]
% \setlength{\tabcolsep}{3.55pt}
% %\vspace{-3.5em}
% \centering
% %\begin{scriptsize}
% \begin{footnotesize}
% %\begin{small}
% %\begin{sc}
% \centering
% \begin{tabular}{ccc}
% \toprule
% & Training time & Test time \\
% \hline
% %
% \multirow{2}{*}{NIW-Meta}\Tstrut & $(F_S\!+\!F_Q\!+\!B_Q)\cdot (M_{L}\!+\!1)$ & $(F_S\!+\!B_S)\cdot M_{V} \ +$ \\ 
% & $+\ O(d)$ & $(F_S\!+\!F_Q)\cdot M_{S} + O(d)$ \\ 
% \hline
% ProtoNet\Tstrut & $F_S\!+\!F_Q\!+\!B_Q$ & $F_S\!+\!F_Q$ \\ 
% %
% \bottomrule
% %\vspace{+0.1em}
% \end{tabular}
% %\end{sc}
% \end{footnotesize}
% %\end{scriptsize}
% %\end{small}
% \caption{(Per-episode) Time complexity of our NIW-Meta vs.~ProtoNet. 
% %All quantities are per-episode costs. 
% We denote by $F_D$ and $B_D$ the forward-pass and backpropagation times with data $D=S$upport or $Q$uery. In our algorithm, $M_{L}$, $M_{V}$, and $M_{S}$ indicate the numbers of SGLD iterations, test-time variational inference steps for (\ref{eq:test_elbo}), and test-time model samples $\theta^{(s)}$, respectively. The costs required for reparametrized sampling in model space and regularizer computation in (\ref{eq:ultimate_optim}) are denoted by $O(d)$ where $d=$ number of backbone parameters. 
% }
% \label{tab:complexity}
% \vspace{-1.0em}
% \end{table}
% %%%%

%%%%%%%%%%%%%%%%%%%%%%%%%%%%%%%%%%%%%%%%%%%%%%%%%%%%%%%%%%%%%%%%%%%%%%%%%%%%%%%
%%%%%%%%%%%%%%%%%%%%%%%%%%%%%%%%%%%%%%%%%%%%%%%%%%%%%%%%%%%%%%%%%%%%%%%%%%%%%%%
\section{Related Work}\label{sec:related}

%Differentiate \cite{bmaml,platipus,gordon2019metaPred,maml_vs_bmaml,metaqda,gpdkt,grant2018bayesMAML,nguyen2020uncertaintyMAML}. Maybe briefly here and use an extended RW in supplementary to do a more complete job?

Due to the limited space it is overwhelming to review all general FSL and meta learning algorithms here. We refer the readers to the excellent comprehensive surveys~\cite{survey1,survey2} on the latest techniques. 
%\textbf{Comparison to existing Bayesian FSL %few-shot learning methods.
%} 
We rather focus on discussing recent Bayesian approaches and relation to ours. 
Although several Bayesian FSL approaches have been proposed before, most of them dealt with only a small fraction of the  network weights (e.g., a readout head alone) as random variables~\cite{cnp,anp,cnaps,gordon2019metaPred,gpdkt,metaqda}. This  considerably limits the benefits from uncertainty modeling of full network parameters. 

Bayesian approaches to MAML~\cite{platipus,bmaml,amortized_bmaml,nguyen2020uncertaintyMAML} are popular probabilistic extensions of the gradient-based adaptation in MAML~\cite{maml} with known theoretical support~\cite{maml_vs_bmaml}. But we find that they are weak in several aspects to be considered as principled Bayesian methods. For instance, Probabilistic MAML (PMAML or PLATIPUS)~\cite{platipus,grant2018bayesMAML} has a similar hierarchical graphical model structure as ours, but their learning algorithm is considerably deviated from the original variational inference objective. Unlike the  original derivation of the KL term measuring the divergence between the posterior and prior on the task-specific variable $\theta_i$, namely $\mathbb{E}_{q(\phi)}[\textrm{KL}(q_i(\theta_i|\phi) || p(\theta_i|\phi))]$ as in (\ref{eq:elbo_finite}), in PMAML they measure the divergence on the global variable $\phi$, aiming to align the two adapted models, one from the support data only $q(\phi|S_i)$ and the other from both support and query $q(\phi|S_i,Q_i)$. VAMPIRE~\cite{nguyen2020uncertaintyMAML} incorporates uncertainty modeling to MAML by extending MAML's point estimate to a distributional one that is learned by variational inference. However, it inherits all computational overheads from MAML, hindering scalability. 
The BMAML~\cite{bmaml} is not a hierarchical Bayesian model, but aims to replace MAML's gradient-based {\em deterministic} adaptation steps by the {\em stochastic} counterpart using the samples (called particles) from %the distribution 
$p(\theta_i|S_i)$, % via the Stein variational gradient descent~\cite{svgd}, 
thus adopting stochastic ensemble-based adaptation steps. If we use a single particle instead, it reduces exactly to MAML. Thus existing Bayesian approaches are not directly related to our hierarchical Bayesian perspective.

%%%%%%%%%%%%%%%%%%%%%%%%%%%%%%%%%%%%%%%%%%%%%%%%%%%%%%%%%%%%%%%%%%%%%%%%%%%%%%%
%%%%%%%%%%%%%%%%%%%%%%%%%%%%%%%%%%%%%%%%%%%%%%%%%%%%%%%%%%%%%%%%%%%%%%%%%%%%%%%
\section{Evaluation}\label{sec:expmt}

We perform empirical study to demonstrate the superior performance of the proposed Bayesian few-shot learning algorithm dubbed \textbf{NIW-Meta} to the state-of-the-arts.

%%%%%%%%%%%%%%%%%%%%%%%%%%%%%%%%%%%%%
\subsection{Few-shot Classification}\label{sec:classification}

%%%%%%%%%%%
%\subsubsection{Standard Benchmarks with ResNet Backbones}\label{sec:standard}
\keypoint{Standard benchmarks with ResNet backbones.
%Benchmarks with standard backbones.
}
For standard benchmark comparison using the popular ResNet backbones, in particular ResNet-18~\cite{resnet} and WideResNet~\cite{wrn}, we test our method on: {\em mini}Imagenet (Table~\ref{tab:mini}) and {\em tiered}ImageNet (Table~\ref{tab:tiered}). %, and (c) CIFAR-FS -- \hl{TODO}
We follow the standard protocols (details of experimental settings in Supplement). 
Our NIW-Meta exhibits consistent improvement over the SOTAs for different settings in support set size and backbones.

%%%% 
\begin{table}[t!]
%\vspace{-0.3em}
\centering
%\begin{scriptsize}
\begin{footnotesize}
%\begin{small}
%\begin{sc}
\centering
\begin{tabular}{cccc}
\toprule
Model & Backbone & 1-Shot & 5-Shot \\
\hline
MAML~\cite{maml}\Tstrut & Conv-4 & $48.70 \pm 1.84$ & $63.11 \pm 0.92$ \\
MetaQDA~\cite{metaqda}\Tstrut & Conv-4 & $56.41 \pm 0.80$ & $72.64 \pm 0.62$ \\
\textbf{NIW-Meta (Ours)}\Tstrut & Conv-4 & $\pmb{56.84 \pm 0.76}$ & $\pmb{72.93 \pm 0.53}$ \\
%----%
\hline
%Baseline$++$~\cite{baseline}\Tstrut & ResNet-18 & $51.87 \pm 0.77$ & $75.68 \pm 0.63$ \\
%
ProtoNet~\cite{protonet}\Tstrut & ResNet-18 & $54.16 \pm 0.82$ & $73.68 \pm 0.65$ \\
AM3~\cite{am3}\Tstrut & ResNet-12 & $65.21 \pm 0.49$ & $75.20 \pm 0.36$ \\
R2D2~\cite{r2d2}\Tstrut & ResNet-12 & $59.38 \pm 0.31$ & $78.15 \pm 0.24$ \\
RelationNet2~\cite{relationnet2}\Tstrut & ResNet-12 & $63.92 \pm 0.98$ & $77.15 \pm 0.59$ \\
MetaOpt~\cite{metaopt}\Tstrut & ResNet-12 & $64.09 \pm 0.62$ & $80.00 \pm 0.45$ \\
SimpleShot~\cite{simpleshot}\Tstrut & ResNet-18 & $62.85 \pm 0.20$ & $80.02 \pm 0.14$ \\
S2M2~\cite{s2m2}\Tstrut & ResNet-18 & $64.06 \pm 0.18$ & $80.58 \pm 0.12$ \\
MetaQDA~\cite{metaqda}\Tstrut & ResNet-18 & $65.12 \pm 0.66$ & $80.98 \pm 0.75$ \\
\textbf{NIW-Meta (Ours)}\Tstrut & ResNet-18 & $\pmb{65.49 \pm 0.56}$ & $\pmb{81.71 \pm 0.17}$ \\
%----%
\hline
SimpleShot~\cite{simpleshot}\Tstrut & WRN-28-10 & $63.50 \pm 0.20$ & $80.33 \pm 0.14$ \\
S2M2~\cite{s2m2}\Tstrut & WRN-28-10 & $64.93 \pm 0.18$ & $83.18 \pm 0.22$ \\
MetaQDA~\cite{metaqda}\Tstrut & WRN-28-10 & $67.83 \pm 0.64$ & $84.28 \pm 0.69$ \\
\textbf{NIW-Meta (Ours)}\Tstrut & WRN-28-10 & $\pmb{68.54 \pm 0.26}$ & $\pmb{84.81 \pm 0.28}$ \\
\bottomrule
\end{tabular}
%\end{sc}
\end{footnotesize}
%\end{scriptsize}
%\end{small}
%\vspace{-2.5em}
\caption{%Few-shot classification 
Results with standard backbones %networks 
on \textbf{{\em mini}ImageNet}.
}
\label{tab:mini}
\vspace{-1.0em}
\end{table}
%%%%

%%%% 
\begin{table}[t!]
%\vspace{-0.3em}
\centering
%\begin{scriptsize}
\begin{footnotesize}
%\begin{small}
%\begin{sc}
\centering
\begin{tabular}{cccc}
\toprule
Model & Backbone & 1-Shot & 5-Shot \\
\hline
MAML~\cite{maml}\Tstrut & Conv-4 & $51.67 \pm 1.81$ & $70.30 \pm 1.75$ \\
ProtoNet~\cite{protonet}\Tstrut & Conv-4 & $53.31 \pm 0.89$ & $72.69 \pm 0.74$ \\
RelationNet2~\cite{relationnet2}\Tstrut & Conv-4 & $\pmb{60.58 \pm 0.72}$ & $72.42 \pm 0.69$ \\
%
%SimpleShot~\cite{simpleshot}\Tstrut & Conv-4 & $51.02 \pm 0.20$ & $68.98 \pm 0.18$ \\
%
MetaQDA~\cite{metaqda}\Tstrut & Conv-4 & $58.11 \pm 0.48$ & $74.28 \pm 0.73$ \\
\textbf{NIW-Meta (Ours)}\Tstrut & Conv-4 & $58.82 \pm 0.91$ & $\pmb{74.86 \pm 0.70}$ \\
%----%
\hline
TapNet~\cite{tapnet}\Tstrut & ResNet-12 & $63.08 \pm 0.15$ & $80.26 \pm 0.12$ \\
RelationNet2~\cite{relationnet2}\Tstrut & ResNet-12 & $68.58 \pm 0.63$ & $80.65 \pm 0.91$ \\
MetaOpt~\cite{metaopt}\Tstrut & ResNet-12 & $65.81 \pm 0.74$ & $81.75 \pm 0.53$ \\
SimpleShot~\cite{simpleshot}\Tstrut & ResNet-18 & $69.09 \pm 0.22$ & $84.58 \pm 0.16$ \\
MetaQDA~\cite{metaqda}\Tstrut & ResNet-18 & $69.97 \pm 0.52$ & $85.51 \pm 0.58$ \\
\textbf{NIW-Meta (Ours)}\Tstrut & ResNet-18 & $\pmb{70.52 \pm 0.19}$ & $\pmb{85.83 \pm 0.17}$ \\
%----%
\hline
LEO~\cite{leo}\Tstrut & WRN-28-10 & $66.33 \pm 0.05$ & $81.44 \pm 0.09$ \\
SimpleShot~\cite{simpleshot}\Tstrut & WRN-28-10 & $69.75 \pm 0.20$ & $85.31 \pm 0.15$ \\
S2M2~\cite{s2m2}\Tstrut & WRN-28-10 & $73.71 \pm 0.22$ & $88.59 \pm 0.14$ \\
MetaQDA~\cite{metaqda}\Tstrut & WRN-28-10 & $74.33 \pm 0.65$ & $89.56 \pm 0.79$ \\
\textbf{NIW-Meta (Ours)}\Tstrut & WRN-28-10 & $\pmb{74.59 \pm 0.33}$ & $\pmb{89.76 \pm 0.23}$ \\
\bottomrule
\end{tabular}
%\end{sc}
\end{footnotesize}
%\end{scriptsize}
%\end{small}
%\vspace{-2.5em}
\caption{%Few-shot 
%Classification 
Results with standard backbones %networks 
on \textbf{{\em tiered}ImageNet}.
}
\label{tab:tiered}
%\vspace{-0.5em}
\end{table}
%%%%

%%%%%%%%%%%
%\subsubsection{Large-scale ViT Backbones}\label{sec:vit}
\keypoint{Large-scale ViT backbones.}
We also test our method on the large-scale (pretrained) ViT backbones DINO-small (Dino/s) and DINO-base (DINO/b)~\cite{caron2021emerging}, similarly as the setup in~\cite{hu2022pmf}. 
We summarize in Table~\ref{tab:vit} the results on the three benchmarks:  {\em mini}Imagenet, CIFAR-FS, and {\em tiered}ImageNet. Our NIW-Meta adopts the same NCC head as ProtoNet after the ViT feature extractor. As claimed in~\cite{hu2022pmf}, using the pretrained feature extractor and further finetuning it significantly boost the performance of few-shot learning algorithms including ours. Among the competing methods, our approach yields the highest accuracy for most cases. In particular, compared to the shallow Bayesian MetaQDA~\cite{metaqda}, treating all network weights as random variates in our model turns out to be more effective than the readout parameters alone.

%%%% 
\begin{table}[t!]
\setlength{\tabcolsep}{2.42pt}
%\vspace{-0.3em}
\centering
%\begin{scriptsize}
\begin{footnotesize}
%\begin{small}
%\begin{sc}
\centering
\begin{tabular}{cccccccc}
\toprule
\multirow{2}{*}{Model} & \scriptsize{Backbone} & \multicolumn{2}{c}{\scriptsize{{\em mini}ImageNet}} & \multicolumn{2}{c}{\scriptsize{CIFAR-FS}} & \multicolumn{2}{c}{\scriptsize{{\em tiered}ImageNet}} \\
\cline{3-4} \cline{5-6} \cline{7-8}
& \scriptsize{/ Pretrain} & \scriptsize{1-shot} & \scriptsize{5-shot} & \scriptsize{1-shot} & \scriptsize{5-shot} & \scriptsize{1-shot} & \scriptsize{5-shot} \\
\hline
ProtoNet~\cite{protonet}\Tstrut & DINO/s & $93.1$ & $98.0$ & $81.1$ & $92.5$ & 
%$88.96$ & $95.84$ \\
$89.0$ & $95.8$ \\
MetaOpt~\cite{metaopt}\Tstrut & DINO/s & $92.2$ & $97.8$ & $70.2$ & $84.1$ & 
%$87.54$ & $94.73$ \\
$87.5$ & $94.7$ \\
MetaQDA~\cite{metaqda}\Tstrut & DINO/s & $92.0$ & $97.0$ & $77.2$ & $90.1$ & 
%$87.79$ & $95.58$ \\
$87.8$ & $95.6$ \\
%
%\textbf{NIW-Meta (Ours)}\Tstrut & DINO-small & $\pmb{93.40 \pm 0.07}$ & $\pmb{98.18 \pm 0.05}$ & $\pmb{82.84 \pm 0.26}$ & $\pmb{92.93 \pm 0.11}$ & $\pmb{89.33 \pm 0.46} $ & $\pmb{96.01 \pm 0.14}$ \\
\textbf{NIW-Meta (Ours)}\Tstrut & DINO/s & $\pmb{93.4}$ & $\pmb{98.2}$ & $\pmb{82.8}$ & $\pmb{92.9}$ & $\pmb{89.3} $ & $\pmb{96.0}$ \\
%----%
\hline
ProtoNet~\cite{protonet}\Tstrut & DINO/b & $95.3$ & $98.4$ & $84.3$ & $92.2$ & 
%$91.15$ & $96.53$ \\
$91.2$ & $96.5$ \\
MetaOpt~\cite{metaopt}\Tstrut & DINO/b & $94.4$ & $98.4$ & $72.0$ & $86.2$ & 
%$89.47$ & $95.70$ \\
$89.5$ & $95.7$ \\
MetaQDA~\cite{metaqda}\Tstrut & DINO/b & $94.7$ & $\pmb{98.7}$ & $80.9$ & $\pmb{93.8}$ & 
%$89.65$ & $96.53$ \\
$89.7$ & $96.5$ \\
%
%\textbf{NIW-Meta (Ours)}\Tstrut & DINO-base & $\pmb{95.49 \pm 0.05} $ & $\pmb{98.70 \pm 0.02}$ & $\pmb{84.72 \pm 0.09} $ & $93.21 \pm 0.07$ & $\pmb{91.36 \pm 0.21} $ & $\pmb{96.66 \pm 0.11}$ \\
\textbf{NIW-Meta (Ours)}\Tstrut & DINO/b & $\pmb{95.5} $ & $\pmb{98.7}$ & $\pmb{84.7} $ & $93.2$ & $\pmb{91.4} $ & $\pmb{96.7}$ \\
\bottomrule
\end{tabular}
%\end{sc}
\end{footnotesize}
%\end{scriptsize}
%\end{small}
\vspace{+0.1em}
\caption{%Few-shot 
Classification results with large-scale 
ViT backbones.% networks.
}
\label{tab:vit}
\end{table}
%%%%

%%%%%%%%%%%
%\subsubsection{Set-based Adaptation Backbones}\label{sec:feat}
\keypoint{Set-based adaptation backbones.} 
We also conduct experiments using the set-based adaptation architecture called FEAT introduced in~\cite{feat}. The network is tailored for few-shot adaptation, namely $y^Q = G(x^Q,S;\theta)$ where the network $G$ takes the entire support set $S$ and query image $x^Q$ as input. %to the network. 
Note that our NIW-Meta can incorporate any network architecture, even the set-based one like FEAT. As shown in Table~\ref{tab:feat}, the  Bayesian treatment leads to further improvement over~\cite{feat} with this set-based architecture.

%%%% 
\begin{table}[t!]
\setlength{\tabcolsep}{2pt}
%\vspace{-0.3em}
\centering
%\begin{scriptsize}
\begin{footnotesize}
%\begin{small}
%\begin{sc}
\centering
\begin{tabular}{ccccc}
\toprule
\multirow{2}{*}{Model} & \multicolumn{2}{c}{{\em mini}ImageNet} & \multicolumn{2}{c}{{\em tiered}ImageNet} \\
\cline{2-3} \cline{4-5}
& 1-shot & 5-shot & 1-shot & 5-shot \\
\hline
FEAT~\cite{feat}\Tstrut & $66.78$ & $82.05$ & $70.80^{\pm 0.23}$ & $84.79^{\pm 0.16}$ \\
\textbf{NIW-Meta (Ours)}\Tstrut & $\pmb{66.91^{\pm 0.10}}$ & $\pmb{82.28^{\pm 0.15}}$ & $\pmb{70.93^{\pm 0.27}}$ & $\pmb{85.20^{\pm 0.19}}$ \\
\bottomrule
\end{tabular}
%\end{sc}
\end{footnotesize}
%\end{scriptsize}
%\end{small}
%\vspace{-2.5em}
\caption{Comparison between FEAT~\cite{feat} and our method equipped with the same set-based architecture as FEAT. 
}
\label{tab:feat}
\vspace{-1.0em}
\end{table}
%%%%

%%%%%%%%%%%
%\subsubsection{Error Calibration}\label{sec:ece}
\keypoint{Error calibration.} 
One of the key merits of Bayesian modeling is that we have a better calibrated model than deterministic counterparts. We measure the {\em expected calibration errors} (ECE)~\cite{ece} to judge how well the prediction accuracy and the prediction confidence are aligned. More specifically, $ECE = \sum_{b=1}^B \frac{N_b}{N} |acc(b)-conf(b)|$ where we partition test instances into $B$ bins along the model's prediction confidence scores, and $conf(b)$, $acc(b)$ are the average confidence and accuracy for the $b$-th bin, respectively. 
The results on {\em mini}ImageNet with Conv-4 and WRN backbones are shown in Table~\ref{tab:ece_mini}. We used 20 bins and optionally performed the softmax temperature search on validation sets, %beyond the default value $1.0$, 
similarly as~\cite{metaqda}. 
Again, Bayesian inference of whole network weights in our NIW-Meta leads to a far better calibrated model than the shallow counterpart Meta-QDA~\cite{metaqda}.

%%%% 
\begin{table}[t!]
\setlength{\tabcolsep}{4pt}
%\vspace{-0.3em}
\centering
%\begin{scriptsize}
\begin{footnotesize}
%\begin{small}
%\begin{sc}
\centering
\begin{tabular}{cccccc}
\toprule
\multirow{2}{*}{Model} & \multirow{2}{*}{Backbone} & \multicolumn{2}{c}{ECE} & \multicolumn{2}{c}{ECE$+$TS} \\
\cline{3-4} \cline{5-6}
& & 1-shot & 5-shot & 1-shot & 5-shot \\
\hline
Linear classifier\Tstrut & Conv-4 & $8.54$ & $7.48$ & $3.56$ & $2.88$ \\
SimpleShot~\cite{simpleshot}\Tstrut & Conv-4 & $33.45$ & $45.81$ & $3.82$ & $3.35$ \\
MetaQDA-MAP~\cite{metaqda}\Tstrut & Conv-4 & $8.03$ & $5.27$ & $2.75$ & $0.89$ \\
MetaQDA-FB~\cite{metaqda}\Tstrut & Conv-4 & $4.32$ & $2.92$ & $2.33$ & $0.45$ \\
\textbf{NIW-Meta (Ours)}\Tstrut & Conv-4 & $\pmb{2.68}$ & $\pmb{1.88}$ & $\pmb{1.47}$ & $\pmb{0.32}$ \\
%----%
\hline
SimpleShot~\cite{simpleshot}\Tstrut & WRN-28-10 & $39.56$ & $55.68$ & $4.05$ & $1.80$ \\
S2M2$+$Linear~\cite{s2m2}\Tstrut & WRN-28-10 & $33.23$ & $36.84$ & $4.93$ & $2.31$ \\
MetaQDA-MAP~\cite{metaqda}\Tstrut & WRN-28-10 & $31.17$ & $17.37$ & $3.94$ & $0.94$ \\
MetaQDA-FB~\cite{metaqda}\Tstrut & WRN-28-10 & $30.68$ & $15.86$ & $2.71$ & $0.74$ \\
\textbf{NIW-Meta (Ours)}\Tstrut & WRN-28-10 & $\pmb{10.79}$ & $\pmb{7.11}$ & $\pmb{2.03}$ & $\pmb{0.65}$ \\
\bottomrule
\end{tabular}
%\end{sc}
\end{footnotesize}
%\end{scriptsize}
%\end{small}
%\vspace{-2.5em}
\caption{Expected calibration errors (ECE)  on {\em mini}ImageNet. ``ECE$+$TS'' indicates extra tuning of the temperature hyperparameter (default $=1.0$) in the logit-softmax transformation. 
}
\label{tab:ece_mini}
\vspace{-1.0em}
\end{table}
%%%%

% %%%%%%%%%%%
% \subsubsection{TODOs}\label{sec:todo}

% \begin{itemize}
% %
% \item Meta-Dataset (comparison with SUR, URT, and MetaQDA): \hl{TODO}
% %
% \item Meta-Album: \hl{TODO}
% %
% \item Cross-Domain experiments (e.g., mini$\to$CUB or mini$\to$Cars) : \hl{Perhaps, no cross-domain experiment needs to be done since it violates our model/data assumptions where the meta training data and meta test data follow the same task distribution!?}
% %
% \end{itemize}

%%%%%%%%%%%%%%%%%%%%%%%%%%%%%%%%%%%%%
\subsection{Few-shot Regression}\label{sec:regression}

%%%%%%%%%%%
%\subsubsection{(Synthetic) Sine-Line Dataset}\label{sec:sineline}
\keypoint{%(Synthetic) 
Sine-Line dataset~\cite{platipus}.} 
%The Sine-Line~\cite{platipus} 
It %is a famous dataset for testing few-shot regression algorithms. The dataset 
consists of $1D$ $(x,y)$ pairs randomly generated by either linear or sine curves with different scales/slopes/frequencies/phases. For the episodic few-shot learning setup, we follow the standard protocol: each episode is comprised of $k=5$-shot support and %$k_q=45$ 
45 query samples randomly drawn from a random curve (regarded as a task). 
To deal with real-valued targets, we %introduce, for the first time to the best of our knowledge, 
adopt the so-called \textbf{RidgeNet}, which has a parameter-free readout head derived from the support data via (closed-form) estimation of the linear coefficient matrix using the ridge regression. It is analogous to the ProtoNet~\cite{protonet} in classification which has a parameter-free head derived from NCC %the Gaussian linear discrimination analysis 
on support data. A similar model was introduced in~\cite{r2d2} but mainly repurposed for classification. 
We find that RidgeNet leads to much more accurate prediction than the conventional trainable linear head. % whose parameters are trained for each episode. 
For instance, the test errors %on the Sine-Line dataset 
are: RidgeNet $=0.82$ vs.~MAML with linear head $=1.86$. Furthermore, we adopt the ridge head in other models as well, such as MAML, %probabilistic MAML (PMAML)~\cite{platipus}, 
PMAML~\cite{platipus}, 
and our NIW-Meta. 
See Table~\ref{tab:sineline} for the mean squared errors contrasting our NIW-Meta against competing meta learning methods. The table also contains the regression-ECE (R-ECE) calibration errors\footnote{The definition of the R-ECE is quite different from that of the classification ECE in Sec.~\ref{sec:classification}. We follow the notion of {\em goodness of cumulative distribution matching} used in~\cite{r-ece_mlst,r-ece_nips}. Specifically,  denoting by $\hat{Q}_p(x)$ the $p$-th quantile of the predicted distribution $\hat{p}(y|x)$, we measure the deviation of $p_{true}(y \leq \hat{Q}_p(x)|x)$ from $p$ by absolute difference. So it is $0$ for the ideal case $\hat{p}(y|x)=p_{true}(y|x)$. We use empirical CDF estimates and equal-size binning (20 bins) for $p\in[0,1]$ values. Note that by definition we can only measure R-ECE for models with {\em probabilistic} output $\hat{p}(y|x)$. 
} for the Bayesian methods, PMAML~\cite{platipus} and ours, which clearly shows that our model is better calibrated.
%(\hl{Add the definition for R-ECE}) 

%%%% 
\begin{table}[t!]
%\setlength{\tabcolsep}{2pt}
%\vspace{-0.3em}
\centering
%\begin{scriptsize}
\begin{footnotesize}
%\begin{small}
%\begin{sc}
\centering
\begin{tabular}{ccc}
\toprule
Model & Mean squared error & R-ECE \\
\hline
RidgeNet\Tstrut & $0.8210$ & N/A \\
MAML (1-step)~\cite{maml}\Tstrut & $0.8206$ & N/A \\
MAML (5-step)~\cite{maml}\Tstrut & $0.8309$ & N/A \\
PMAML (1-step)~\cite{platipus}\Tstrut & $0.9160$ & $0.2666$ \\
\textbf{NIW-Meta (Ours)}\Tstrut & $\pmb{0.7822}$ & $\pmb{0.1728}$ \\
\bottomrule
\end{tabular}
%\end{sc}
\end{footnotesize}
%\end{scriptsize}
%\end{small}
\vspace{+0.3em}
\caption{Few-shot regression results on the Sine-Line dataset. All methods here adopt the (parameter-free) ridge regression head with L2 regularization coefficient $\lambda\!=\!0.1$, which is significantly more accurate than conventional linear trainable head. PMAML with 5 inner steps incurred numerical errors. 
}
\label{tab:sineline}
\vspace{-1.0em}
\end{table}
%%%%

%%%%%%%%%%%
%\subsubsection{Object Pose Estimation on ShapeNet Datasets}\label{sec:shapenet}
\keypoint{Object pose estimation on ShapeNet datasets.} 
We consider the recent few-shot regression benchmarks~\cite{Gao_2022_CVPR,maml_wo_mem} which introduced four datasets for object pose estimation: {\em Pascal-1D}, {\em ShapeNet-1D}, {\em ShapeNet-2D}, and {\em Distractor}. In all datasets, the main goal is to estimate the pose (positions in pixel and/or rotation angles) of the target object in an image. Each episode is specified by: i) selecting a target object randomly sampled from a pool of objects with different object categories, and ii) rendering the same object in an image with several different random poses (position/rotation) to generate data instances. There are $k$ support samples (input images and target pose labels) and $k_q$ query samples for each episode. For ShapeNet-1D, for instance, $k$ is randomly chosen from $3$ to $15$ while $k_q=15$.

%The first two datasets 
Pascal-1D and ShapeNet-1D are relatively easier datasets than the rest two as we have uniform noise-free backgrounds. On the other hand, to make the few-shot learning problem more challenging, ShapeNet-2D and Distractor datasets further introduce random (real-world) background images and/or so called the {\em distractors} which are objects randomly drawn and rendered that have nothing to do with the target pose to estimate. Except for Pascal-1D, some object categories are dedicated solely for meta testing and not revealed during training, thus yielding two different test scenarios: {\em intra-category} (IC) and {\em cross-category} (CC), in which the test object categories are seen and unseen, respectively. 

In~\cite{Gao_2022_CVPR}, they test different augmentation strategies in their baselines: conventional {\em data augmentation} on input images (denoted by DA),  {\em task augmentation} (TA)~\cite{task_aug} which adds random noise to the target labels to help reducing the memorization issue~\cite{maml_wo_mem}, and {\em domain randomization} (DR)~\cite{dom_rand} which  randomly generates background images during training. Among several possible combinations reported in~\cite{Gao_2022_CVPR}, we follow the strategies that perform the best. 
For the target error metrics (e.g., position Euclidean distances in pixels for Distractor, rotation angle differences for ShapeNet-1D), we follow the metrics used in~\cite{Gao_2022_CVPR}. For instance, the quaternion metric may sound reasonable in ShapeNet-2D due to the non-uniform, non-symmetric structures that reside in the target space (3D rotation angles). 

%\textbf{Results.} 
The results are summarized in Table~\ref{tab:shapenet_easy} (easier datasets; Pascal-1D and ShapeNet-1D) and Table~\ref{tab:shapenet_hard} (harder ones; ShapeNet-2D and Distractor). 
In~\cite{Gao_2022_CVPR}, they have shown that the set-based backbone networks, especially the Conditional Neural Process (CNP)~\cite{cnp} and Attentive Neural Process (ANP)~\cite{anp} outperform the conventional architectures of the conv-net feature extractor with the linear head that are adapted by MAML~\cite{maml} (except for the Pascal-1D case). Motivated by this, we adopt the same set-based CNP/ANP architectures within our NIW-Meta. In addition, we also test the ridge-head model with the conv-net feature extractor (denoted by \textbf{C$+$R}). 
Two additional competing models contrasted here are: the Bayesian context aggregation in CNP (CNP$+$BA)~\cite{baco} and the use of the functional contrastive learning loss as extra regularization (FCL)~\cite{Gao_2022_CVPR}. 

For the easier datasets (Table~\ref{tab:shapenet_easy}), there is a dataset regime where MAML %with conv-net $+$ linear head 
clearly outperforms (Pascal-1D) and underperforms (ShapeNet-1D) the CNP/ANP architectures. Very promisingly, our NIW-Meta consistently performs the best for both datasets, regardless of the choice of the architectures: not just CNP/ANP but also conv-net feature extractor $+$ ridge head (C$+$R).
For the harder datasets (Table~\ref{tab:shapenet_hard}) where MAML is not reported due to the known computational issues and poor performance, our NIW-Meta still exhibits the best test %prediction 
performance with CNP/ANP architectures. Unfortunately, the conv-net $+$ ridge head (C$+$R) did not work well, and our conjecture is that the presence of heavy noise and distractors in the input data requires more sophisticated modeling of interaction/relation among the input instances, as is mainly aimed (and successfully done) by CNP/ANP.

%%%% 
\begin{table}[t!]
\setlength{\tabcolsep}{4.85pt}
\vspace{-0.8em}
\centering
%\begin{scriptsize}
\begin{footnotesize}
%\begin{small}
%\begin{sc}
\centering
\begin{tabular}{cccc}
\toprule
\multirow{2}{*}{Model} & \multirow{2}{*}{Pascal-1D} & \multicolumn{2}{c}{ShapeNet-1D} \\
\cline{3-4}
& & Intra-category & Cross-category \\
\hline
MAML\Tstrut & $1.02 \pm 0.06$ & $17.96$ & $18.79$ \\
CNP~\cite{cnp}\Tstrut & $1.98 \pm 0.22$ & $7.66 \pm 0.18$ & $8.66 \pm 0.19$ \\
ANP~\cite{anp}\Tstrut & $1.36 \pm 0.25$ & $5.81 \pm 0.23$ & $6.23 \pm 0.12$ \\
\hline
NIW-Meta w/ C$+$R\Tstrut & $\pmb{0.89 \pm 0.06}$ & $5.62 \pm 0.38$ & $6.57 \pm 0.39$ \\
NIW-Meta w/ CNP\ \Tstrut & $0.94 \pm 0.15$ & $5.74 \pm 0.17$ & $6.91 \pm 0.18$ \\
NIW-Meta w/ ANP\ \Tstrut & $0.95 \pm 0.09$ & $\pmb{5.47 \pm 0.12}$ & $\pmb{6.06 \pm 0.18}$ \\
\bottomrule
\end{tabular}
%\end{sc}
\end{footnotesize}
%\end{scriptsize}
%\end{small}
%\vspace{-2.5em}
\caption{Pose estimation test errors for Pascal-1D and ShapeNet-1D. The mean squared errors in rotation angle differences. 
Our method NIW-Meta is equipped with three different backbones: C$+$R $=$ a Conv-net feature extractor with the Ridge head, CNP, and ANP. %, where the latter two are set-based architectures that take the entire support set and a query image as input and return target for query. 
Augmentation: TA for Pascal-1D and TA$+$DA for ShapeNet-1D.
}
\label{tab:shapenet_easy}
\vspace{-1.0em}
\end{table}
%%%%

%%%% 
\begin{table}[t!]
\setlength{\tabcolsep}{2pt}
\vspace{-0.8em}
\centering
%\begin{scriptsize}
\begin{footnotesize}
%\begin{small}
%\begin{sc}
\centering
\begin{tabular}{ccccc}
\toprule
\multirow{2}{*}{Model} & \multicolumn{2}{c}{ShapeNet-2D} & \multicolumn{2}{c}{Distractor} \\
\cline{2-3} \cline{4-5}
& IC & CC & IC & CC \\
\hline
CNP~\cite{cnp}\Tstrut & $14.20^{\pm 0.06}$ & $13.56^{\pm 0.28}$ & $2.45$ & $3.75$ \\
CNP$+$BA~\cite{baco}\Tstrut & $14.16^{\pm 0.08}$ & $13.56^{\pm 0.18}$ & $2.44$ & $3.97$ \\
CNP$+$FCL~\cite{Gao_2022_CVPR}\Tstrut & $-$ & $-$ & $2.00$ & $3.05$ \\
ANP~\cite{anp}\Tstrut & $14.12^{\pm 0.14}$ & $13.59^{\pm 0.10}$ & $2.65$ & $4.08$ \\
ANP$+$FCL~\cite{Gao_2022_CVPR}\Tstrut & $14.01^{\pm 0.09}$ & $13.32^{\pm 0.18}$ & $-$ & $-$ \\
\hline
NIW-Meta w/ C$+$R\Tstrut & $21.25^{\pm 0.76}$ & $20.82^{\pm 0.43}$ & $8.90^{\pm 0.26}$ & $17.31^{\pm 0.38}$ \\
NIW-Meta w/ CNP\ \Tstrut & $13.86^{\pm 0.20}$ & $13.04^{\pm 0.13}$ & $\pmb{1.80^{\pm 0.01}}$ & $\pmb{2.94^{\pm 0.14}}$  \\
NIW-Meta w/ ANP\ \Tstrut & $\pmb{13.74^{\pm 0.30}}$ & $\pmb{12.95^{\pm 0.48}}$ & $3.10^{\pm 0.48}$ & $5.20^{\pm 0.88}$ \\
\bottomrule
\end{tabular}
%\end{sc}
\end{footnotesize}
%\end{scriptsize}
%\end{small}
%\vspace{-2.5em}
\caption{Pose estimation test errors for ShapeNet-2D and Distractor. Quaternion differences $\times 10^{-2}$ (ShapeNet-2D) and pixel errors (Distractor). 
%Our method NIW-Meta is equipped with three different backbones: C$+$R $=$ a Conv-net feature extractor with the Ridge head, CNP, and ANP, where the latter two are set-based architectures that take the entire support set and query image as input and return target for query. 
The same interpretation as Table~\ref{tab:shapenet_easy}. 
Augmentation: TA$+$DA$+$DR for ShapeNet-2D and DA for Distractor.
}
\label{tab:shapenet_hard}
\vspace{-0.2em}
\end{table}
%%%%

%%%%
\begin{figure}[t!]
%\vspace{-3.5em}
\begin{center}
%
%\begin{subfigure}[b]{0.9\textwidth}
\centering
\includegraphics[trim = 1mm 2mm 5mm 13mm, clip, scale=0.275%0.235%0.285
]{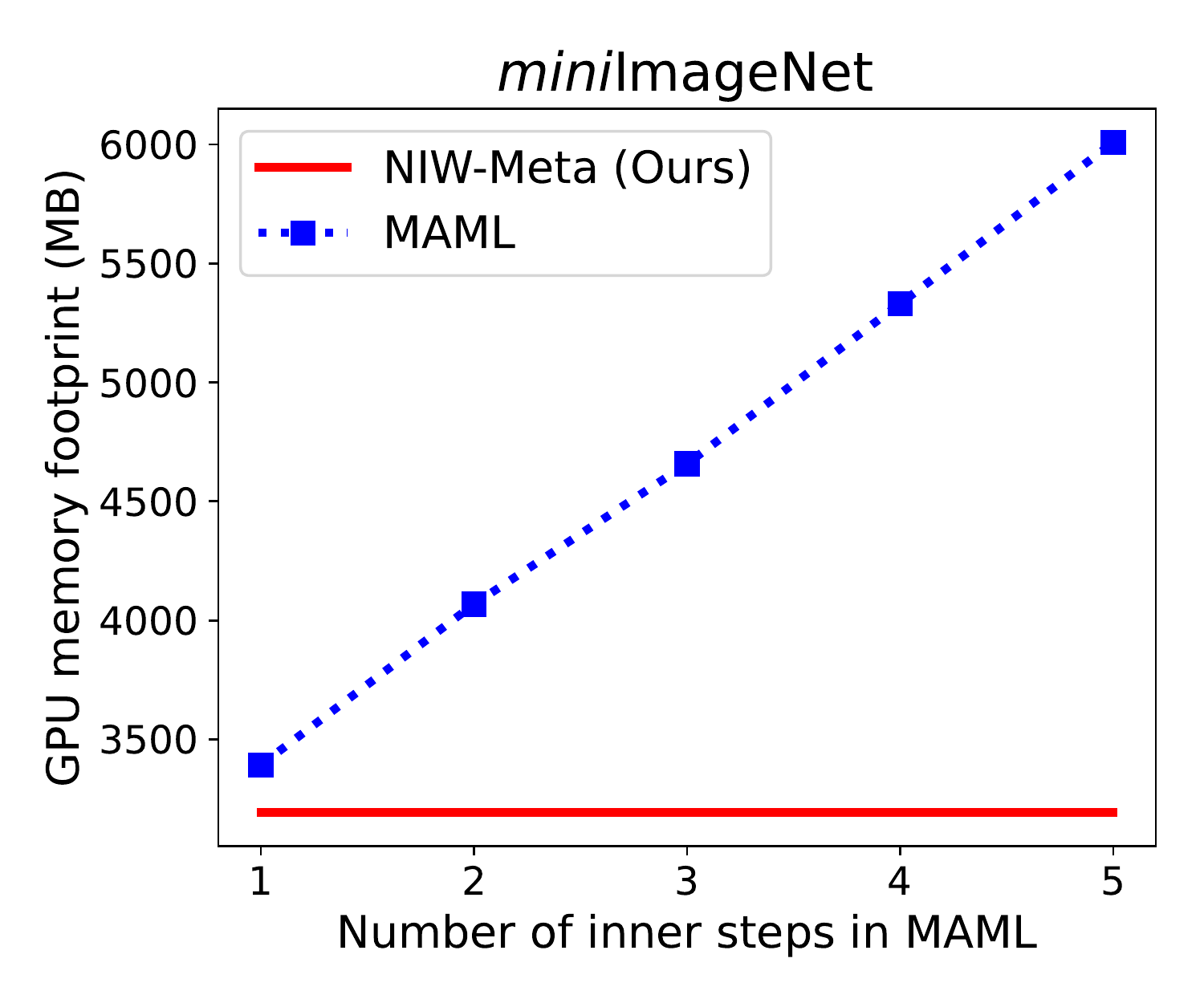}
\ \ 
%
%\includegraphics[trim = 1mm 2mm 5mm 4mm, clip, scale=0.285%0.235%0.285
%]{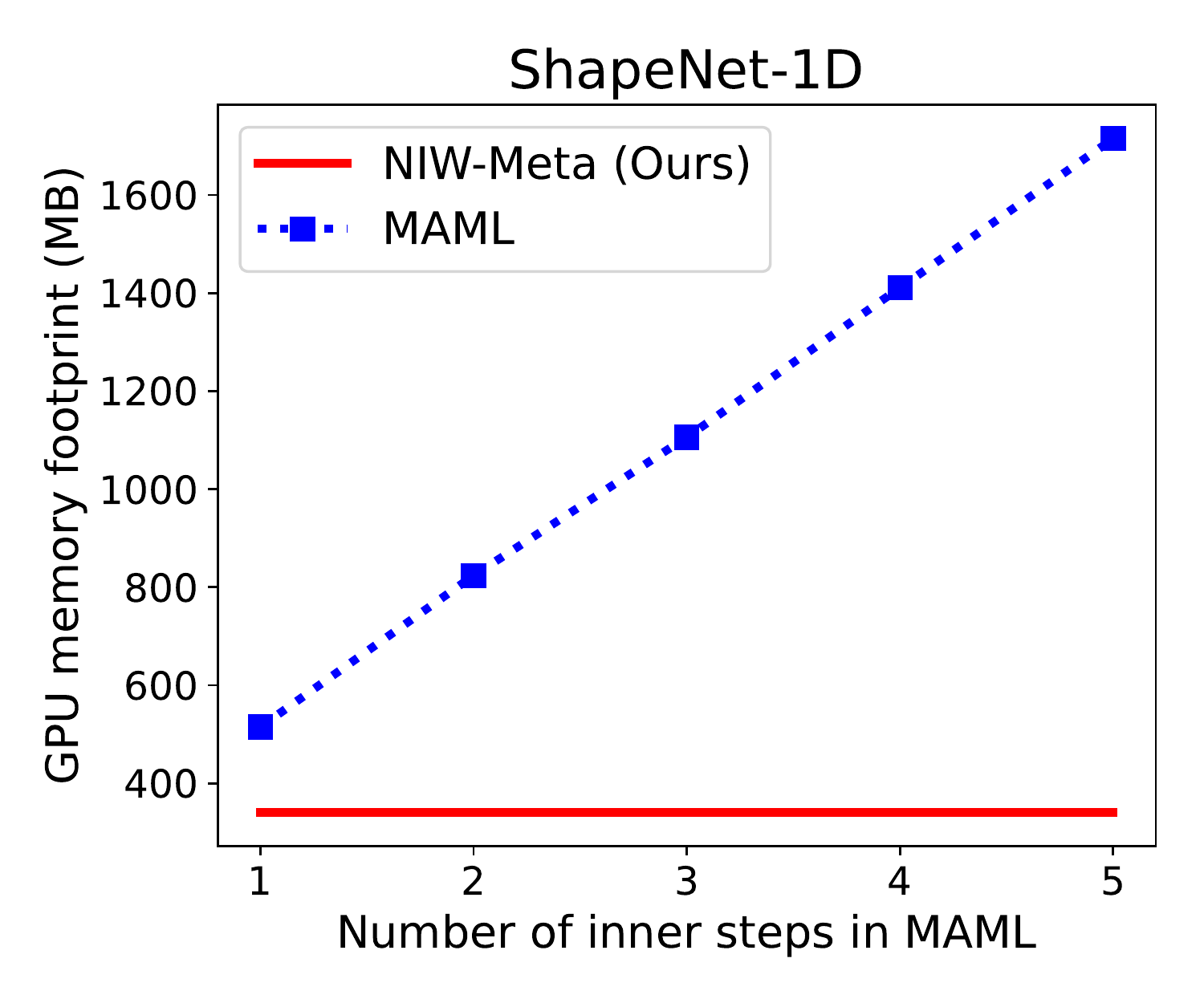}
%\ \ \ \ \ \ 
%
\includegraphics[trim = 1mm 2mm 5mm 13mm, clip, scale=0.275%0.235%0.285
]{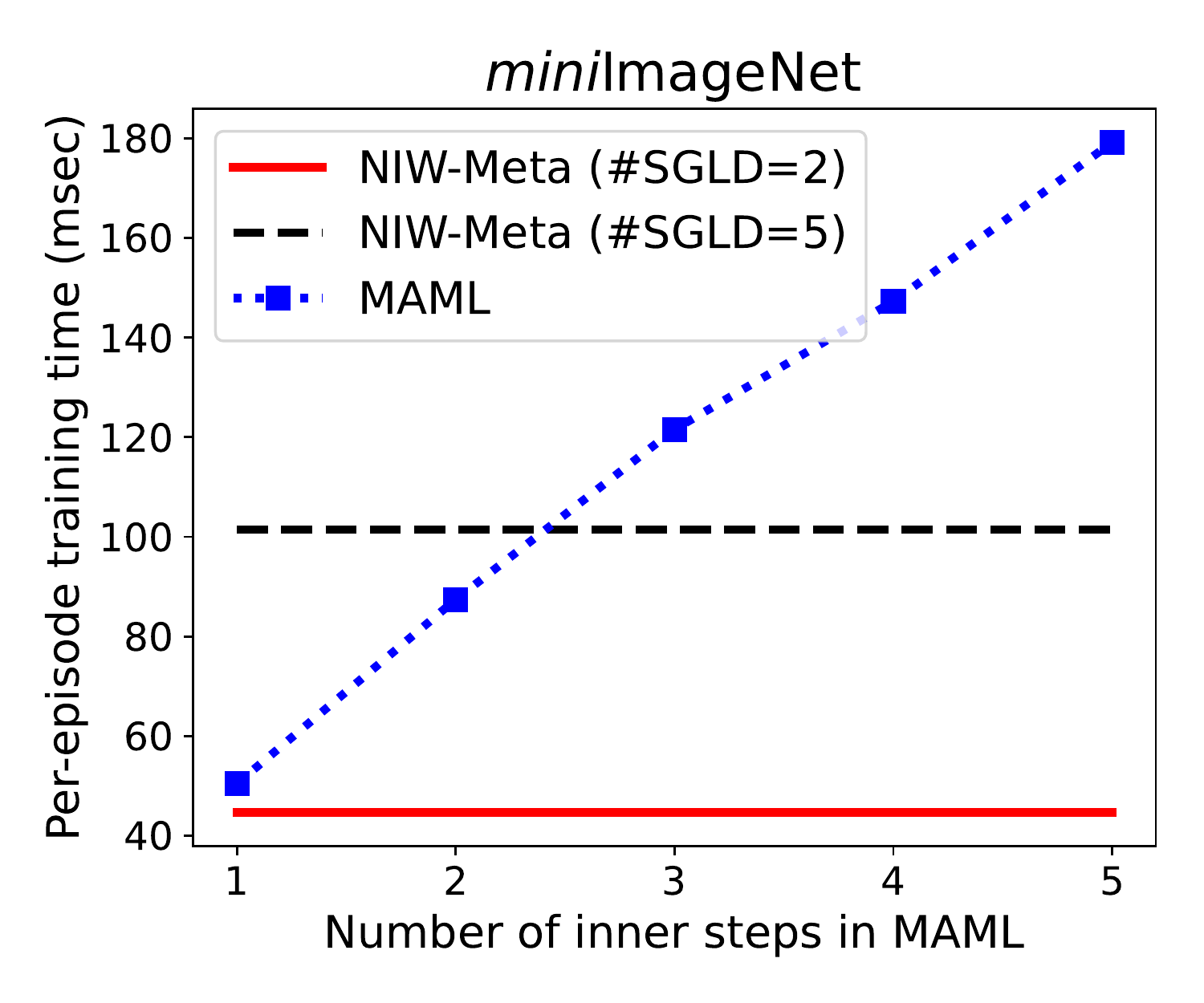}
%\ \ 
%
%\includegraphics[trim = 1mm 2mm 5mm 4mm, clip, scale=0.285%0.235%0.285
%]{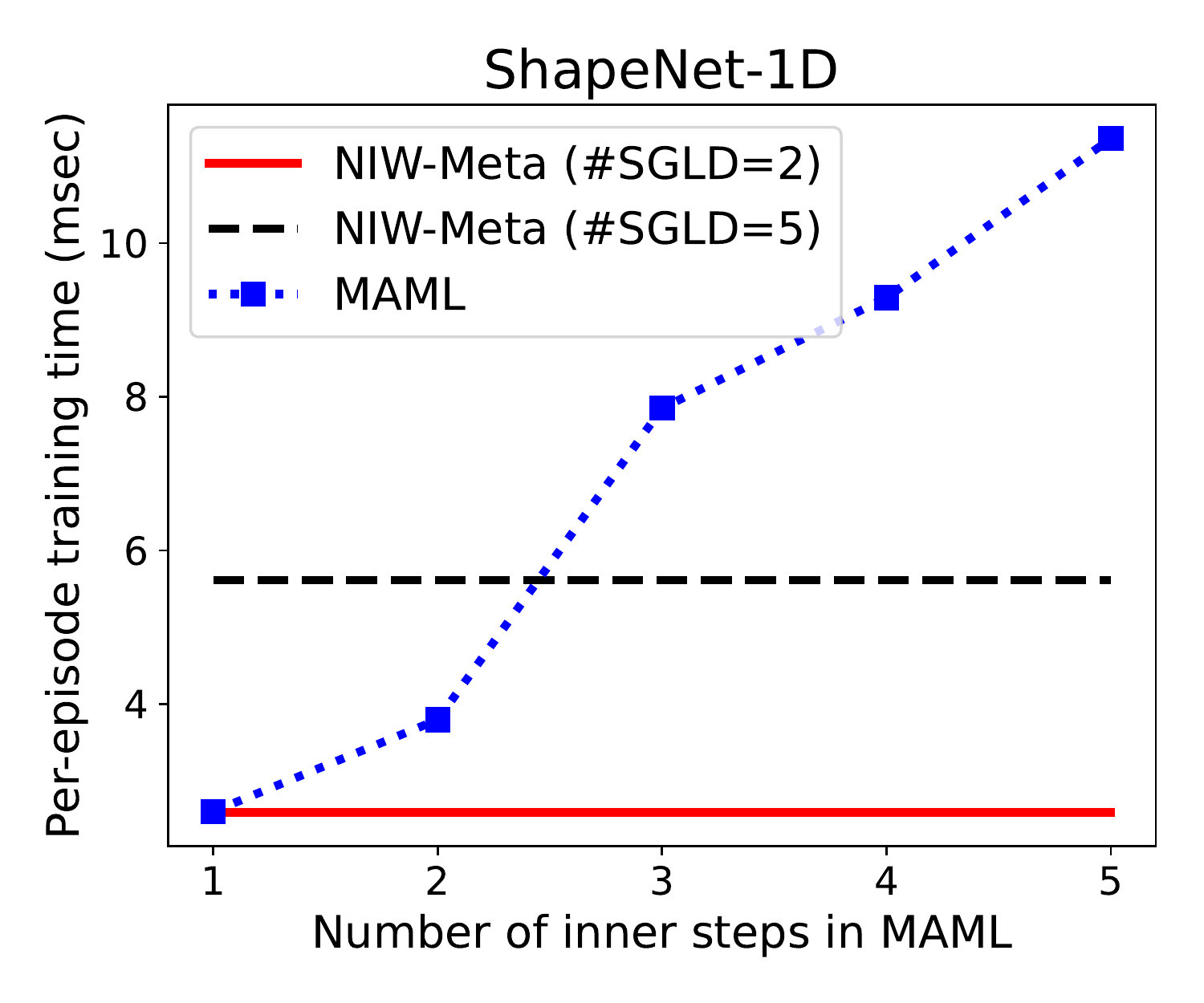}
%
%\vspace{-1.5em}
%\\ \ \ \ (a) GPU memory footprints \ \ \ \ \ \ \ \ \ \ \ \ \ \ \ \ \ \ \ \ \ \ \ \ \ \ \ \ \ \ \ \ \ \ \ \ \ \ \ \ \ \ \ \ \ \ \ \ \ \ (b) Per-episode training times
\end{center}
\vspace{-1.2em}
\caption{Computational complexity of MAML~\cite{maml} and our NIW-Meta. 
(Left) GPU memory footprints (in MB) for a single batch. (Right) Per-episode training times (in milliseconds). %We use the ResNEt-18 backbone for {\em mini}ImageNet in 1-shot classification and the conv-net backbone for ShapeNet-1D regression (10 episodes per batch). 
}
\vspace{-1.0em}
\label{fig:complexity}
\end{figure}
%%%%

\subsection{Memory Footprints and Running Times}\label{sec:memory_time}

We claimed in the paper that one of the main drawbacks of MAML~\cite{maml} is the computational overhead to keep track of a large computational graph for inner gradient descent steps. Unlike MAML, our NIW-Meta has a much more efficient episodic optimization strategy, i.e., our one-time optimization only computes the (constant) first/second-order moment statistics of the episodic loss function without storing the full optimization trace. % graph. 
To verify this, we measure and compare the memory footprints and running times of MAML and NIW-Meta on two real-world classification/regression datasets: {\em mini}ImageNet 1-shot with the ResNet-18 backbone and ShapeNet-1D with the conv-net backbone. The results in Fig.~\ref{fig:complexity} (ShapeNet-1D in Supp.) show that NIW-Meta has far lower memory requirement than MAML (even smaller than 1-inner-step MAML) while MAML suffers from heavy use of memory space, nearly linearly  increasing as the number of inner steps. The running times of our NIW-Meta are not prohibitively larger compared to MAML where the main computational bottleneck is the SGLD iterations for quadratic approximation of the one-time episodic optimization. We tested two scenarios %which perform equally well 
with the number of SGLD iterations 2 and 5, and we have nearly the same (or even better) training speed as the 1-inner-step MAML.

%%%%%%%%%%%%%%%%%%%%%%%%%%%%%%%%%%%%%%%%%%%%%%%%%%%%%%%%%%%%%%%%%%%%%%%%%%%%%%%
%%%%%%%%%%%%%%%%%%%%%%%%%%%%%%%%%%%%%%%%%%%%%%%%%%%%%%%%%%%%%%%%%%%%%%%%%%%%%%%
\section{Conclusion}

%In this paper 
We have proposed a new hierarchical Bayesian perspective to the episodic FSL problem. By having a higher-level task-agnostic random variate and episode-wise task-specific variables, we formulate a principled Bayesian inference view of the FSL problem with large/infinite evidence. The effectiveness of our approach has been verified empirically in terms of both prediction accuracy and calibration, on a wide range of classification/regression tasks with complex backbones including ViT and set-based adaptation networks.

%\hl{Appendix: 1) detailed derivations, 2) practical implementation?? }

{\small
\bibliographystyle{ieee_fullname}
\bibliography{main}
}

\onecolumn
\appendix

\centerline{\huge\textbf{{Appendix}}}

\begin{flushleft}
\vspace{+1.0em}
{\large\textbf{{Table of Contents}}}
\end{flushleft}
%%%%
\begin{itemize}
\vspace{-1.0em}
\item Proofs for Generalization Error Bounds 
(Sec.~\ref{app_sec:generalization})
  \begin{itemize}
      \item Proof for PAC-Bayes-$\lambda$ Bound (Sec.~\ref{app_sec:pac_bayes})
      \item Proof for Regression Analysis Bound (Sec.~\ref{app_sec:regression_analysis})
  \end{itemize}
\item Detailed Derivations (Sec.~\ref{app_sec:derivations})
%\item Time Complexity Analysis (Sec.~\ref{app_sec:time_complexity})
%\item Practical Implementation
%(Sec.~\ref{app_sec:practical_impl})
\item Implementation Details and Experimental Settings
(Sec.~\ref{app_sec:expmt_details})
  \begin{itemize}
      \item Computational Complexity (Sec.~\ref{app_sec:complexity})
  \end{itemize}
%\item Extended Related Work (Sec.~\ref{app_sec:extra_related_work})
%\item Additional Experimental Results (Sec.~\ref{app_sec:extra_expmts}) \hl{in case URL-adopted version works for Meta-Dataset...}
\vspace{+2.0em}
\end{itemize}
%%%%

%%%%%%%%%%%%%%%%%%%%%%%%%%%%%%%%%%%%%%%%%%%%%%%%%%%%%%%%%%%%%%%%%%%%%%%%%%%%%%%
%%%%%%%%%%%%%%%%%%%%%%%%%%%%%%%%%%%%%%%%%%%%%%%%%%%%%%%%%%%%%%%%%%%%%%%%%%%%%%%
\section{Proofs for Generalization Error Bounds}\label{app_sec:generalization}

We prove the two theorems Theorem~4.1 and Theorem~4.2 in the main paper that upper-bound the generalization error of the model that is averaged over the learned posterior %distribution 
$q(\phi,\theta_{1:\infty})$. 
Without loss of generality we assume $|D_i|\!=\!n$ for all episodes $i$. We let $(q^*(\phi),\{q_i^*(\theta_i)\}_{i=1}^\infty)$ be the optimal solution of Eq.~(9).

%%%%%%%%%%%%%%%%%%%%%%%%%%%%%%%%%%%%%%%%%%%%%%%%%%%%%%%%%%%%
\subsection{Proof for PAC-Bayes-$\lambda$ Bound} \label{app_sec:pac_bayes}
      
First, Theorem~4.1, reiterated below as Theorem~\ref{app_thm:gen_pac_bayes}, relates the generalization error to the ultimate ELBO loss Eq.~(9) that we minimized in our algorithm. % by utilizing the recent PAC-Bayes-$\lambda$ bound~\cite{pac_bayes_lambda,pac_bayes_backprop}. 

%%%%
\begin{theorem}[PAC-Bayes-$\lambda$ bound] Let $R_i(\theta)$ be the generalization error of model $\theta$ for the task $i$, more specifically, $R_i(\theta) = \mathbb{E}_{(x,y)\sim \mathcal{T}_i}[-\log p(y|x,\theta)]$. The following holds with probability $1-\delta$ for arbitrary small $\delta>0$:
\begin{align}
\mathbb{E}_{i\sim \mathcal{T}} \mathbb{E}_{q_i^*(\theta_i)}[R_i(\theta_i)] \ \leq \ \frac{2\epsilon^*}{n},
\end{align}
where $\epsilon^*$ is the optimal value of Eq.~(9). % or (\ref{app_eq:elbo_optim_orig}).
\label{app_thm:gen_pac_bayes}
\end{theorem}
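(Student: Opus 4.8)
The plan is to reduce the hierarchical statement to a single per-episode application of the PAC-Bayes-$\lambda$ inequality, followed by averaging over episodes. Recall the single-task PAC-Bayes-$\lambda$ bound~\cite{pac_bayes_lambda,pac_bayes_backprop}: for a fixed, data-independent prior $P$ over hypotheses, a per-sample loss valued in $[0,1]$, a sample of size $n$, and any $\delta>0$, with probability at least $1-\delta$ the inequality
\begin{align}
\mathbb{E}_Q[R(\theta)] \ \leq\ \frac{\mathbb{E}_Q[\hat{R}(\theta)]}{1-\lambda/2} \ + \ \frac{\mathrm{KL}(Q\|P) + \ln(2\sqrt{n}/\delta)}{\lambda(1-\lambda/2)\,n}
\nonumber
\end{align}
holds simultaneously for all posteriors $Q$ and all $\lambda\in(0,2)$, where $\hat R$ and $R$ denote empirical and population risks. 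Because within episode $i$ the samples are i.i.d.\ from $\mathcal{T}_i$ and $l_i(\theta)=\sum_{(x,y)\in D_i}\big(-\log p(y|x,\theta)\big)$, the empirical risk of $\theta$ on episode $i$ is exactly $\hat R_i(\theta)=l_i(\theta)/n$ and its population risk is $R_i(\theta)$ as in the statement. As is standard for PAC-Bayes with log-loss, I would assume $-\log p(y|x,\theta)$ is bounded so that a rescaling into $[0,1]$ is available.

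Next I would instantiate this bound on every episode $i$ with posterior $Q=q_i^*(\theta_i)$ and with the data-independent prior
\begin{align}
\bar p_i(\theta) \ :=\ \int p(\theta\,|\,\phi)\,q^*(\phi)\,d\phi,
\nonumber
\end{align}
the marginal over weights induced by the learned $q^*(\phi)$. Convexity of $\mathrm{KL}$ in its second argument (Jensen) yields $\mathrm{KL}(q_i^*\|\bar p_i)\leq \mathbb{E}_{q^*(\phi)}\big[\mathrm{KL}(q_i^*(\theta_i)\|p(\theta_i|\phi))\big]$, i.e.\ the per-episode complexity term is controlled precisely by the expected-$\mathrm{KL}$ term appearing in~(\ref{eq:elbo_optim_orig}). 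Taking $\lambda=1$ makes both constants equal to $2$. Averaging over $i\sim\mathcal{T}$, using that the training episodes are i.i.d.\ with $N\to\infty$ so that the episode-average coincides with the exact expectation $\mathbb{E}_{i\sim\mathcal{T}}$, and using that $(q^*(\phi),\{q_i^*\})$ attains the minimum of~(\ref{eq:elbo_optim_orig}) so that $\mathbb{E}_{i\sim\mathcal{T}}\big[\mathbb{E}_{q_i^*}[l_i(\theta_i)] + \mathbb{E}_{q^*(\phi)}[\mathrm{KL}(q_i^*(\theta_i)\|p(\theta_i|\phi))]\big]=\epsilon^*$, the right-hand side collapses to $2\epsilon^*/n$ plus the lower-order term $2\ln(2\sqrt{n}/\delta)/n=O(\ln n/n)$, which the main-text statement absorbs.

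The main obstacle is the one generic to learned-prior PAC-Bayes: $\bar p_i$ is built from $q^*(\phi)$, which was fit on all the training data, whereas the single-task inequality demands a prior independent of episode $i$'s data. This is where the $N\to\infty$ assumption does the work --- in the infinite-episode limit no individual $D_i$ influences $q^*(\phi)$, so $\bar p_i$ is asymptotically a legitimate data-independent prior; a careful version runs a two-level PAC-Bayes argument whose meta-level deviation is $O(1/\sqrt{N})$ plus $\tfrac1N\mathrm{KL}(q^*(\phi)\|p(\phi))$, both vanishing as $N\to\infty$ (the latter being exactly the term already discarded in passing from~(\ref{eq:elbo_finite}) to~(\ref{eq:elbo_optim_orig})). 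A secondary technicality --- the union bound over the confidence parameter across infinitely many episodes --- is handled by absorbing the per-episode failure probabilities into the (vanishing) meta-level term or into an $O(\delta)$ correction, rather than summing them.
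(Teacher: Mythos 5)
Your route is genuinely different from the paper's, and the difference matters. The paper applies the PAC-Bayes-$\lambda$ inequality exactly \emph{once}, to the full hierarchical parameter $\beta=(\phi,\theta_{1:N})$ over the concatenated dataset $D_{1:N}$ of size $m=nN$, taking $p(\beta)=p(\phi)\prod_i p(\theta_i|\phi)$ as the reference distribution and $q(\beta)=q(\phi)\prod_i q_i(\theta_i)$ as the posterior. Because that reference distribution is the fixed prior of the generative model, it is manifestly data-independent, and the single $\mathrm{KL}(q(\beta)\|p(\beta))$ factorizes into $\mathrm{KL}(q(\phi)\|p(\phi))+\sum_i\mathbb{E}_{q(\phi)}[\mathrm{KL}(q_i\|p(\theta_i|\phi))]$; dividing by $nN$ and letting $N\to\infty$ kills both the $\mathrm{KL}(q(\phi)\|p(\phi))$ and $\log(2\sqrt{nN}/\delta)$ terms, and what remains is precisely $\tfrac{2}{n}\,Q$. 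You instead run a per-episode bound with the marginal prior $\bar p_i=\int p(\theta|\phi)\,q^*(\phi)\,d\phi$ and use joint convexity of $\mathrm{KL}$ to get $\mathrm{KL}(q_i^*\|\bar p_i)\le\mathbb{E}_{q^*(\phi)}[\mathrm{KL}(q_i^*\|p(\theta_i|\phi))]$, which is a nice observation, but it introduces two problems the paper never has to face.

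The first and more serious one you flag yourself: $\bar p_i$ is built from $q^*(\phi)$, which was fit on \emph{all} the $D_j$ including $D_i$, so the per-episode PAC-Bayes inequality you want to apply is being invoked with a data-dependent prior, which is not licensed by the theorem. Your claim that this becomes harmless as $N\to\infty$ (``no individual $D_i$ influences $q^*(\phi)$'') is a plausible intuition but is not a proof: the bound you wrote holds for a \emph{fixed} prior, and no amount of averaging afterward repairs the fact that the high-probability event in which it holds is defined relative to a prior that already peeked at $D_i$. Making this rigorous would require the two-level meta PAC-Bayes machinery you gesture at (where the inner prior is released by an outer PAC-Bayes bound over $q(\phi)$), which is a substantially different and longer argument, not a limit of yours. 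The second problem is the uniformity over episodes: each per-episode bound fails with probability $\delta$, and you need all of infinitely many of them simultaneously; you propose ``absorbing'' the failure probabilities rather than union-bounding, but no mechanism is given, and a naive union bound contributes $\log N/n$ which does not vanish. The paper sidesteps both issues for free by never leaving the single-application setting: there is only one high-probability event and only one (fixed) prior. So while your Jensen step and the $\lambda=1$ instantiation are correct, the argument as written has a genuine gap at exactly the point where the paper's construction is doing the real work.
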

%%%%
\begin{proof}
We utilize the recent PAC-Bayes-$\lambda$ bound~\cite{pac_bayes_lambda,pac_bayes_backprop}, a variant of the traditional PAC-Bayes bounds~\cite{pacbayes_mcallester,pacbayes_langford,pacbayes_seeger,pacbayes_maurer}. It states that 
for any $\lambda\in(0,2)$, the following holds with probability at least $1\!-\!\delta$:
%%%%
\begin{align}
\mathbb{E}_{q(\beta)}[R(\beta)] \leq \frac{1}{1-\lambda/2}\mathbb{E}_{q(\beta)}[\hat{R}_m(\beta)] + \frac{1}{\lambda(1\!-\!\lambda/2)} \frac{\textrm{KL}(q(\beta)||p(\beta)) + \log(2\sqrt{m}/\delta) }{m},
\label{app_eq:pac_bayes_lambda}
\end{align}
%%%%
where $\beta$ represents all model parameters (random variables), $R(\beta)$ is the generalisation error/loss for a given model $\beta$, and $\hat{R}_m(\beta)$ is the empirical error/loss on the training data of size $m$. It holds for any data-{\em independent} (e.g., prior) distribution $p(\beta)$ and any distribution (possibly data-dependent, e.g., posterior) $q(\beta)$. 

Now we rewrite Eq.~(9) %, equivalently Eq.~(8), 
in an equivalent form as follows: %(\ref{app_eq:elbo_optim_orig})) 
%%%%
\begin{align}
&\min_{L_0, \{L_i\}_{i=1}^\infty} \ Q(L_0, \{L_i\}_{i=1}^\infty) \ \ \ \ \textrm{where} \\ 
& \ \ \ \ \ \ \ \ Q(L_0, \{L_i\}_{i=1}^\infty) = \frac{1}{N} \bigg(  \mathbb{E}_{q(\phi;L_0) \prod_i q_i(\theta_i;L_i)}\big[
%-\log p(D_{1:N}|\theta_{1:N})
{\textstyle\sum}_i l_i(\theta_i) %-\log p(D_i|\theta_i)
\big] + \textrm{KL}\Big( q(\phi;L_0) {\textstyle\prod}_i q_i(\theta_i;L_i) \ \big|\big| \ p(\phi) {\textstyle\prod}_i p(\theta_i|\phi) \Big)
\bigg)\Bigg|_{N\to\infty} %\ \ \textrm{as $N\to\infty$}. 
\label{app_eq:elbo_optim_orig}
\end{align}
%%%%
Then we set $\beta := \{\phi,\theta_{1:N}\}$, $q(\beta) := q(\phi) \prod_i q_i(\theta_i)$, and $p(\beta) := p(\phi) {\textstyle\prod}_i p(\theta_i|\phi)$. We also define the generalization loss and the empirical loss as follows:
%%%%
\begin{gather}
R(\beta) := \frac{1}{N} \sum_{i=1}^N \mathbb{E}_{(x,y)\sim \mathcal{T}_i}[-\log p(y|x,\theta)] = %\bigg|_{N \to \infty} = %\mathbb{E}_{i\sim \mathcal{T}}[R_i(\theta_i)]
\frac{1}{N} \sum_{i=1}^N R_i(\theta)
\\
\hat{R}_m(\beta) := \frac{1}{N} \sum_{i=1}^N \mathbb{E}_{(x,y)\sim D_i}[-\log p(y|x,\theta)] = %\bigg|_{N \to \infty} =
%-\frac{1}{n N} \log p(D_{1:N}|\theta_{1:N}) \Big|_{N \to \infty} = 
\frac{1}{n}\frac{1}{N} \sum_{i=1}^N -\log p(D_i|\theta_i) = %\bigg|_{N \to \infty} = \ \frac{1}{n} \mathbb{E}_{i\sim \mathcal{T}}[l_i(\theta_i)]
\frac{1}{n} \frac{1}{N} \sum_{i=1}^N l_i(\theta_i) %\ \ \ \ (N \to \infty)
\end{gather}
%%%%
Note that the empirical data size $m=n N$ in our case. 
Plugging these into (\ref{app_eq:pac_bayes_lambda}) with $\lambda\!=\!1$ leads to:
%%%%
\begin{align}
\frac{1}{N} \sum_{i=1}^N \mathbb{E}_{q_i(\theta_i)} [R_i(\theta_i)] \ \leq \ 
2 \bigg( 
  \frac{1}{n} \frac{1}{N} {\textstyle\sum}_{i=1}^N\mathbb{E}_{q_i(\theta_i)}[l_i(\theta_i)] + 
  \frac{\textrm{KL}\big( q(\phi) {\textstyle\prod}_i q_i(\theta_i)  \big|\big|  p(\phi) {\textstyle\prod}_i p(\theta_i|\phi) \big) + \log(2\sqrt{n N}/\delta) }{n N}
\bigg)
% \mathbb{E}_{q(\phi) \prod_i q_i(\theta_i)} \mathbb{E}_{i\sim \mathcal{T}}[R_i(\theta_i)] \ \leq \ 
% 2 \bigg( 
%   \frac{1}{n} \mathbb{E}_{q(\phi) \prod_i q_i(\theta_i)} \mathbb{E}_{i\sim \mathcal{T}}[l_i(\theta_i)] + 
%   \frac{\textrm{KL}\big( q(\phi) {\textstyle\prod}_i q_i(\theta_i)  \big|\big|  p(\phi) {\textstyle\prod}_i p(\theta_i|\phi) \big) + \log(2\sqrt{n N}/\delta) }{n N}
% \bigg) \Bigg|_{N\to\infty}. 
\label{app_eq:pacbayes}
\end{align}
%%%%
Taking $N\!\to\!\infty$ in (\ref{app_eq:pacbayes}) makes i) the LHS become $\mathbb{E}_{i\sim \mathcal{T}} \mathbb{E}_{q_i(\theta_i)}[R_i(\theta_i)]$, ii) 
the complexity term $\frac{\log(2\sqrt{n N}/\delta)}{n N}$ in the RHS vanish, and iii) the RHS converge to $\frac{2}{n} Q(L_0, \{L_i\}_{i=1}^\infty)$. That is, 
%%%%
\begin{align}
\mathbb{E}_{i\sim \mathcal{T}} \mathbb{E}_{q_i(\theta_i)}[R_i(\theta_i)] \ \leq \ 
\frac{2}{n} Q(L_0, \{L_i\}_{i=1}^\infty).
\label{app_eq:pacbayes_ineq}
\end{align}
%%%%
Since (\ref{app_eq:pacbayes_ineq}) holds for {\em any} $q$, we take the minimizer $q^*$ of Eq.~(9), which completes the proof.
\end{proof}

%%%%%%%%%%%%%%%%%%%%%%%%%%%%%%%%%%%%%%%%%%%%%%%%%%%%%%%%%
\subsection{Proof for Regression Analysis Bound}
\label{app_sec:regression_analysis}

Theorem~4.2, reiterated below as Theorem~\ref{app_thm:gen_regr_anal} in a more detailed form, 
is based on the recent regression analysis techniques~\cite{pati18,bai20}. 
Before we prove the theorem, we formally state some core assumptions and notations. Let $P^i(x,y)$ be the {\em true} data distribution for episode/task $i$ where $i=1,\dots,N$ and $N\to\infty$. We consider regression-based data modeling, assuming that the target $y$ is real vector-valued ($y\in\mathbb{R}^{S_y}$). Also it is assumed that there exists a {\em true regression function} $f^i:\mathbb{R}^{S_x} \to \mathbb{R}^{S_y}$ for each $i$, more formally 
$P^i(y|x) = \mathcal{N}(y; f^i(x), \sigma_\epsilon^2 I)$,
where $\sigma_\epsilon^2$ is constant Gaussian output noise variance. 

For easier analysis we assume that the backbone network is an MLP with $L$ width-$M$ hidden layers, and all activation functions $\sigma(\cdot)$ are Lipschitz continuous with 1. %The parameters $\theta$ of the MLP are also assumed to be bounded, more formally, the parameter space $\Theta$ is defined as:
We consider the bounded parameter space, 
$\theta \in \Theta = \{ \theta\in\mathbb{R}^G: ||\theta||_\infty \leq B\}$, %\textrm{MLP with $L$ layers of width $M$} \}$, 
where $G=\dim(\theta)$ and $B$ is the maximal norm bound. Then the prediction (regression) function $f_\theta:\mathbb{R}^{S_x} \to \mathbb{R}^{S_y}$ is induced from  $\theta$ as: $P_\theta(y|x) = \mathcal{N}(y; f_\theta(x), \sigma_\epsilon^2 I)$, where the true noise variance is assumed to be known. 
The expressions $\mathbb{E}_{\theta}[\cdot]$ and $\mathbb{E}^i[\cdot]$ refer to the expectations with respect to model's $P_\theta$ and the true $P^i$, respectively.
The generalisation error measure that we consider is the {\em expected squared Hellinger distance} between the true $P^i$ and the model $P_\theta$, more specifically, 
%%%%
\begin{align}
d^2(P_\theta, P^i) = \mathbb{E}_{x\sim P^i(x)} \big[ H^2(P_\theta(y|x), P^i(y|x)) \big] 
= \mathbb{E}_{x\sim P^i(x)} \Bigg[ 1 - \exp\bigg(-\frac{||f_\theta(x)-f^i(x)||_2^2}{8 \sigma_\epsilon^2}\bigg) \Bigg].
\end{align}
%%%%
Now we state our theorem. 

%%%%
\begin{theorem}[Bound derived from regression analysis] 
%Assume that the variational density family for $q_i(\theta_i)$ is rich enough to subsume Gaussian. 
Let $d^2(P_{\theta_i},P^i)$ be the expected squared Hellinger distance between the true  distribution $P^i(y|x)$ and model's $P_{\theta_i}(y|x)$ for task/episode $i$. Then the following holds with high probability:
\begin{align}
\mathbb{E}_{i\sim \mathcal{T}} \mathbb{E}_{q_i^*(\theta_i)}[d^2(P_{\theta_i}, P^i)] \ \leq \ \frac{C_0}{n} + C_1 \epsilon_n^2 + C_2 (r_n + \lambda^*), \label{eq:gen_bound2}
\end{align}
where $C_\bullet\!>\!0$ are some constant, 
$\lambda^* = \mathbb{E}_{i\sim \mathcal{T}} [\lambda_i^*]$ with $\lambda_i^* = \min_{\theta\in\Theta} \max_{x} ||\mathbb{E}_{\theta}[y|x]-\mathbb{E}^i[y|x]||^2$ is the lowest possible regression error within the underlying network $\Theta$, 
% \begin{align}
% r_n = \frac{T(L+1)}{n} \log M + \frac{T}{n} \log\bigg(S_x \sqrt{\frac{n}{T}}\bigg),
% \end{align}
% and $\epsilon_n = \sqrt{r_n} \log^\delta(n)$ for $\delta>1$ constant. 
$r_n = \frac{G}{n} \bigg( (L+1) \log M + \log\Big(S_x \sqrt{\frac{n}{G}}\Big) \bigg)$,
%\frac{G(L+1)}{n} \log M + \frac{G}{n} \log\bigg(S_x \sqrt{\frac{n}{G}}\bigg)$, 
and $\epsilon_n = \sqrt{r_n} \log^\delta(n)$ for $\delta>1$ constant. 
\label{app_thm:gen_regr_anal}
\end{theorem}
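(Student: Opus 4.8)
The plan is to reduce the bound to a per-task posterior-contraction result for nonparametric regression with a deep-network sieve, and then average over tasks. First I would recall the general posterior-contraction machinery of \cite{pati18,bai20}: for a single regression task with true function $f^i$ and a prior $\pi$ supported on the bounded MLP class $\Theta$, if one can verify (i) a \emph{prior mass} condition — the prior puts enough mass on an $\epsilon_n$-Kullback--Leibler neighborhood of $f^i$ — and (ii) an \emph{entropy/sieve} condition — the metric entropy of $\Theta$ (or a sieve exhausting it) at scale $\epsilon_n$ is $O(n\epsilon_n^2)$ — then the posterior concentrates on a Hellinger ball of radius a multiple of $\epsilon_n$ around $P^{i}$, with the residual approximation error $\lambda_i^*$ added because $f^i$ need not lie exactly in $\Theta$. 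The role of $r_n = \frac{G}{n}\big((L+1)\log M + \log(S_x\sqrt{n/G})\big)$ is precisely the covering-number exponent of an $L$-layer width-$M$ network with $G$ weights bounded by $B$ in sup-norm, using Lipschitz-$1$ activations so that the network map is Lipschitz in its parameters; the extra $\log^\delta(n)$ factor in $\epsilon_n = \sqrt{r_n}\log^\delta(n)$ is the standard slack needed to make the tail/complexity sums summable. The $\frac{1}{n}$ term is the usual parametric-rate contribution once the effective dimension is controlled.

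Second, I would connect this to \emph{our} quantities. The optimal variational posterior $q_i^*(\theta_i)$ from Eq.~(9) is a Gaussian that (by the closed-form solution (\ref{eq:miVi_star})) is centered near the SGLD-estimated mode $\overline{m}_i$ with covariance $(\overline{A}_i+n_0V_0^{-1})^{-1}$; the key point is that an ELBO-optimal (or near-optimal) variational posterior inherits the contraction rate of the true Bayes posterior, because the ELBO controls $\mathrm{KL}(q_i^*\,\|\,p(\theta_i|D_i,\phi^*))$ and hence $q_i^*$ cannot place non-negligible mass far from where the posterior concentrates. Concretely I would invoke the variational-Bayes version of the contraction theorem (as in the Alquier--Ridgway / Yang--Pati--Bhattacharya style results, which \cite{bai20} builds on): replacing the posterior by the variational minimizer only adds the variational gap, which for our Gaussian family is itself $O(\epsilon_n^2)$. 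Then $\mathbb{E}_{q_i^*(\theta_i)}[d^2(P_{\theta_i},P^i)] \le C_0/n + C_1\epsilon_n^2 + C_2(r_n+\lambda_i^*)$ holds for each task $i$ with high probability.

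Third, I would average over tasks. Since the episodes are i.i.d.\ from $p(\mathcal{T})$ and the per-task bound holds uniformly (the constants depend only on $B, L, M, G, S_x, \sigma_\epsilon^2$, not on the particular task), taking $\mathbb{E}_{i\sim\mathcal{T}}$ of both sides — or rather, noting $\lim_{N\to\infty}\frac1N\sum_{i=1}^N$ of the per-task inequalities and using that the right-hand side is bounded — yields $\mathbb{E}_{i\sim\mathcal{T}}\mathbb{E}_{q_i^*(\theta_i)}[d^2(P_{\theta_i},P^i)] \le C_0/n + C_1\epsilon_n^2 + C_2(r_n + \lambda^*)$ with $\lambda^* = \mathbb{E}_{i\sim\mathcal{T}}[\lambda_i^*]$, which is exactly (\ref{eq:gen_bound2}). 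A small technical point here is interchanging the limit/expectation with the high-probability statement; I would handle this by a union bound over a growing-but-controlled collection of tasks, or more cleanly by fixing the failure probability per task and observing the bound is deterministic in $n$ once the complexity events hold.

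The main obstacle I expect is the prior-mass / small-ball condition (i): one must show that the \emph{hierarchical} prior $p(\theta_i|\phi^*) = \mathcal{N}(\theta_i;m_0,V_0/(n_0+d+2))$ — whose mean and covariance are themselves learned — still places $e^{-Cn\epsilon_n^2}$ mass on a KL-neighborhood of the best in-class approximant to $f^i$. This requires controlling the learned $(m_0,V_0,n_0)$, e.g.\ assuming $V_0$ is bounded below and $m_0$ stays in $\Theta$, so that the Gaussian prior is sufficiently flat on the relevant region; this is a genuine assumption that the deep-network approximation literature usually gets for free with a fixed Gaussian or uniform prior, and it is where I would need to be most careful in stating hypotheses. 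The entropy condition (ii) is comparatively routine given the sup-norm ball $\Theta$ and Lipschitz activations — it is just a standard layer-by-layer covering-number estimate producing the stated $r_n$.
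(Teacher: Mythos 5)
Your proposal takes a genuinely different route from the paper's, and the obstacle you correctly flag at the end is in fact where the two approaches diverge. You frame the result as a per-task posterior-contraction theorem for the learned hierarchical prior $p(\theta_i|\phi^*)$: verify a prior-mass (small-ball) condition and an entropy/sieve condition, invoke Ghosal--van~der~Vaart-style contraction from \cite{pati18,bai20}, argue the variational posterior inherits the rate because the ELBO controls $\mathrm{KL}(q_i^*\,\|\,p(\theta_i|D_i,\phi^*))$, and then average the per-task bounds over $i\sim\mathcal{T}$. The paper does something structurally different: it applies the Donsker--Varadhan variational formula with the test function $h(\theta_i)=\log\eta_i(\theta_i)$, $\eta_i=\exp(\rho(P_{\theta_i},P^i)+n\,d^2(P_{\theta_i},P^i))$, which directly yields
\begin{align*}
n\,\mathbb{E}_{q_i}[d^2(P_{\theta_i},P^i)] \ \leq\ \mathbb{E}_{q_i}[-\rho] \;+\; \mathbb{E}_{q(\phi)}\!\big[\mathrm{KL}(q_i\|p(\theta_i|\phi))\big] \;+\; \mathbb{E}_{q(\phi)}\!\big[\log\mathbb{E}_{p(\theta_i|\phi)}[\eta_i]\big],
\end{align*}
and the last term is handled by Theorem~3.1 of \cite{pati18} uniformly over the base measure. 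Summing over $i$, dividing by $N$, and sending $N\to\infty$ identifies the first two terms as the negative ELBO of Eq.~(9) plus $\log P^i(D_i)$. The crucial move is then to exploit minimality of the learned $(q^*(\phi),\{q_i^*\})$ by comparing against a hand-constructed benchmark $\tilde q_i(\theta_i)=\mathcal{N}(\theta_i^*,\sigma_n^2 I)$ (centered at the best in-class approximant $\theta_i^*$, with $\sigma_n^2\asymp G/n$) together with $\tilde q(\phi)=\arg\min_{q(\phi)}\mathbb{E}_{i}\mathbb{E}_{q(\phi)}\mathrm{KL}(\tilde q_i\|p(\theta_i|\phi))$. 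The $\mathbb{E}_{\tilde q_i}[-\rho]$ term is then bounded by $C'n(r_n+\lambda_i^*)$ via Lemma~4.1 of \cite{bai20}, and the KL term collapses to a constant because $\tilde q(\phi)$ is chosen to minimize it.

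The substantive difference is that the paper never needs to establish a small-ball probability for the \emph{learned} prior $p(\theta_i|\phi^*)$ — the very condition you identified as your ``main obstacle'' and flagged as requiring extra hypotheses on $(m_0,V_0,n_0)$. Because $q^*$ is the global minimizer of the hierarchical ELBO, it suffices to exhibit one comparator $(\tilde q(\phi),\{\tilde q_i\})$ whose negative ELBO is controllable; the prior mass that would otherwise be needed is effectively replaced by $\mathrm{KL}(\tilde q_i\|p(\theta_i|\phi))$ under the best-fitting $\tilde q(\phi)$. Your plan, by contrast, would go through the true Bayes posterior and require showing that variational inference inherits contraction (in the Alquier--Ridgway / Yang--Pati--Bhattacharya sense), which adds the prior-mass assumption and also a VB-gap argument neither of which the paper needs. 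A secondary difference: your averaging step (per-task bounds, then a union bound over a growing task set) is also not how the paper proceeds — the paper sums before invoking minimality, which matters because what is actually minimized is the \emph{joint} ELBO across tasks, not per-task ELBOs. So your approach would prove a correct but weaker theorem under additional assumptions, whereas the paper's DV-plus-comparator route achieves the stated bound with no conditions on the learned prior.
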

%%%%
\begin{proof}
We utilize the Donsker-Varadhan's (DV) theorem~\cite{dv} to relate the variational ELBO objective function to the Hellinger distance. The DV theorem says that the following inequality holds for any distributions $p$, $q$ and any (bounded) function $h(z)$:
\begin{align}
\log \mathbb{E}_{p(z)}[e^{h(z)}] = \max_q \big( \mathbb{E}_{q(z)}[h(z)] - \textrm{KL}(q||p) \big).
\label{eq:dv}
\end{align}
In our case, we define: $p(z) := p(\theta_i|\phi)$, $q(z) := q_i(\theta_i)$, $h(z) := \log \eta_i(\theta_i)$ with
\begin{align}
\eta_i(\theta_i) := \exp\big( \rho(P_{\theta_i}(D_i), P^i(D_i)) + n d^2(P_{\theta_i},P^i) \big)
\end{align}
where $\rho(P_{\theta_i}(D_i), P^i(D_i)) := \log \frac{P_{\theta_i}(D_i)}{P^i(D_i)}$ is the log-ratio. Note that $P(D_i) = P(Y_i|X_i)$. 
Plugging these into (\ref{eq:dv}) leads to the following inequality which holds for any $\phi$:
\begin{align}
n \cdot \mathbb{E}_{q_i(\theta_i)}[d^2(P_{\theta_i},P^i)] \ &\leq \ 
\mathbb{E}_{q_i(\theta_i)}[-\rho(P_{\theta_i}(D_i), P^i(D_i))] + \textrm{KL}(q_i(\theta_i)||p(\theta_i|\phi)) + \log \mathbb{E}_{p(\theta_i|\phi)}[\eta_i(\theta_i)].
\end{align}
We take the expectation with respect to $q(\phi)$, which yields:
\begin{align}
n \cdot \mathbb{E}_{q_i(\theta_i)}[d^2(P_{\theta_i},P^i)] \ &\leq \ 
\mathbb{E}_{q_i(\theta_i)}[-\rho(P_{\theta_i}(D_i), P^i(D_i))] + \mathbb{E}_{q(\phi)}[\textrm{KL}(q_i(\theta_i)||p(\theta_i|\phi))] + \mathbb{E}_{q(\phi)}\big[\log \mathbb{E}_{p(\theta_i|\phi)}[\eta_i(\theta_i)]\big].
\label{app_eq:bound1}
\end{align}
From the regression theorem~\cite{pati18} (Theorem~3.1 therein), it is known that $\mathbb{E}_{s(\theta)}[\eta(\theta)] \leq e^{C n \epsilon_n^2}$ for any distribution $s(\theta)$ with high probability. We apply this result to the last term of (\ref{app_eq:bound1}). Summing it over $i=1,\dots,N$ leads to:
\begin{align}
n\cdot \sum_{i=1}^N \mathbb{E}_{q_i(\theta_i)}[d^2(P_{\theta_i},P^i)] \ \leq \ 
\sum_{i=1}^N \mathbb{E}_{q_i(\theta_i)}[-\rho(P_{\theta_i}(D_i), P^i(D_i))] + \sum_{i=1}^N \mathbb{E}_{q(\phi)}[\textrm{KL}(q_i(\theta_i)||p(\theta_i|\phi))] + N C n \epsilon_n^2.
\end{align}
By dividing both sides by $N$ and sending $N\to\infty$, we have:
\begin{align}
n\cdot \mathbb{E}_{i\sim\mathcal{T}} \mathbb{E}_{q_i(\theta_i)}[d^2(P_{\theta_i},P^i)] \ \leq \ 
\underbrace{
  \mathbb{E}_{i\sim\mathcal{T}} \Big[ \mathbb{E}_{q_i(\theta_i)}[-\rho(P_{\theta_i}(D_i), P^i(D_i))] + \mathbb{E}_{q(\phi)}[\textrm{KL}(q_i(\theta_i)||p(\theta_i|\phi))] \Big]
}_{= \ -\textrm{ELBO}(q) \ + \ \log P^i(D_i)} + C n \epsilon_n^2.
\label{app_eq:bound2}
\end{align}
As indicated, the right hand side is composed of $-\textrm{ELBO}(q)$ (the objective function of Eq.~(9)), the constant $\log P^i(D_i)$, and the complexity term $C n \epsilon_n^2$.

The next step is to plug in the optimal $q^*$ to have a meaningful upper bound. To this end, we introduce/define  $\tilde{q}_i(\theta_i)$ and $\tilde{q}(\phi)$ as follows:
\begin{gather}
\tilde{q}_i(\theta_i) = \mathcal{N}(\theta_i; \theta_i^*, \sigma_n^2 I), \ \ \tilde{q}(\phi) = \arg\min_{q(\phi)} \ \mathbb{E}_{i \sim \mathcal{T}} \mathbb{E}_{q(\phi)}[\textrm{KL}(\tilde{q}_i(\theta_i)||p(\theta_i|\phi))], \ \ \textrm{where} \\ 
\theta_i^* = \arg\min_{\theta\in\Theta} \max_{x\in\mathbb{R}^{S_x}} ||f_\theta(x)-f^i(x)||^2, \ \ \sigma_n^2 = \frac{G}{8n}A, 
\\
A^{-1} = \log(3S_x M) \cdot (2BM)^{2(L+1)} \cdot \bigg( \Big(S_x+1+\frac{1}{BM-1}\Big)^2 + \frac{1}{(2BM)^2-1} + \frac{2}{(2BM-1)^2} \bigg).
\end{gather}
Since $(\{q_i^*(\theta_i)\}_{i=1}^N, q^*(\phi))$ is the minimizer of the negative ELBO Eq.~(9), we clearly have $-\textrm{ELBO}(q^*) \leq -\textrm{ELBO}(\tilde{q})$. 
We plug $q^*$ into (\ref{app_eq:bound2}) and apply this ELBO inequality to have:
\begin{align}
n\cdot \mathbb{E}_{i\sim\mathcal{T}} \mathbb{E}_{q^*_i(\theta_i)}[d^2(P_{\theta_i},P^i)] \ \leq \ 
  \mathbb{E}_{i\sim\mathcal{T}} \mathbb{E}_{\tilde{q}_i(\theta_i)}[-\rho(P_{\theta_i}(D_i), P^i(D_i))] + \mathbb{E}_{i\sim\mathcal{T}} \mathbb{E}_{\tilde{q}(\phi)}[\textrm{KL}(\tilde{q}_i(\theta_i)||p(\theta_i|\phi))] 
  + C n \epsilon_n^2.
\label{app_eq:bound3}
\end{align}
The second term of the right hand side of (\ref{app_eq:bound3}) is constant (independent of $n$) and denoted by $\tilde{C}$. For the first term of the right hand side, we use the following fact from the proof of Lemma~4.1 in~\cite{bai20}, which says that with high probability,
\begin{align}
\mathbb{E}_{\tilde{q}_i(\theta_i)}[-\rho(P_{\theta_i}(D_i), P^i(D_i))] \ \leq \  C'n(r_n+\lambda_i^*),
\label{eq:E_q_tilde_bound}
\end{align}
for some constant $C'>0$. 
Using this bound, (\ref{app_eq:bound3}) can be written as follows:
\begin{align}
n\cdot \mathbb{E}_{i\sim\mathcal{T}} \mathbb{E}_{q^*_i(\theta_i)}[d^2(P_{\theta_i},P^i)] \ \leq \ 
  \tilde{C} \ + \ C'n \Big( r_n + \mathbb{E}_{i\sim\mathcal{T}}[\lambda_i^*] \Big) \ + \ C n \epsilon_n^2. 
\end{align}
The proof completes by dividing both sides by $n$.
\end{proof}

%%%%%%%%%%%%%%%%%%%%%%%%%%%%%%%%%%%%%%%%%%%%%%%%%%%%%%%%%%%%%%%%%%%%%%%%%%%%%%%
%%%%%%%%%%%%%%%%%%%%%%%%%%%%%%%%%%%%%%%%%%%%%%%%%%%%%%%%%%%%%%%%%%%%%%%%%%%%%%%
\section{Detailed Derivations}\label{app_sec:derivations}

%%%%%%%%%%%%%%%%%%%%%%%%%%%%%%%%%%%%%%%%%%%%%%%%
\subsection{ELBO Derivation for Eq.~(8)}\label{app_sec:elbo_deriv}

We derive the upper bound of the negative marginal log-likelihood for our Bayesian FSL model, that is, deriving Eq.~(8) in the main paper. 
%%%%
\begin{align}
&\textrm{KL}\big( q(\phi,\theta_{1:N}) \ || \ p(\phi,\theta_{1:N}|D_{1:N}) \big) \ = \ \mathbb{E}_q \Bigg[ \log \frac{q(\phi) \cdot \prod_i q_i(\theta_i) \cdot p(D_{1:N})}{p(\phi) \cdot \prod_i p(\theta_i|\phi) \cdot \prod_i p(D_i|\theta_i)} \Bigg] \\
&\ \ \ \ \ \ \ \ \ \ \ \ = \ \underbrace{\textrm{KL}(q(\phi)||p(\phi)) + \sum_{i=1}^N \Big( \mathbb{E}_{q_i(\theta_i)}[-\log p(D_i|\theta_i)] +  \mathbb{E}_{q(\phi)}\big[\textrm{KL}(q_i(\theta_i) || p(\theta_i|\phi))\big] \Big)}_{=: \mathcal{L}(L)} %\nonumber \\
%&\ \ \ \ \ \ \ \ \ \ \ \ \ \ \ \ \ \ \ \ 
\ + \ \log p(D_{1:N}).
\end{align}
%%%%
Since KL divergence is non-negative, $-\mathcal{L}(L)$ must be lower bound of the data log-likelihood $\log p(D_{1:N})$, rendering $\mathcal{L}(L)$ an upper bound of $-\log p(D_{1:N})$.

%%%%%%%%%%%%%%%%%%%%%%%%%%%%%%%%%%%%%%%%%%%%%%%%
\subsection{Derivation for $\mathbb{E}_{q(\phi)}\big[\textrm{KL}(q_i(\theta_i) || p(\theta_i|\phi))\big]$ in Eq.~(9--10)}\label{app_sec:elbo_deriv2}

We will derive the full closed-form formula for $\mathbb{E}_{q(\phi)}\big[\textrm{KL}(q_i(\theta_i) || p(\theta_i|\phi))\big]$, which not only leads to equivalence between Eq.~(10) and Eq.~(11), but is also used in deriving Eq.~(14). In a nutshell, the formula that we will prove is as follows:
%%%%
\begin{align}
\mathbb{E}_{q(\phi)}\big[\textrm{KL}(q_i(\theta_i) || p(\theta_i|\phi))\big] = &
\frac{1}{2} \bigg(\!-\!d \log(2e) + \log \frac{|V_0|}{|V_i|}
  - \psi_d\Big(\frac{n_0}{2}\Big) + \frac{d}{l_0} + %\nonumber \\
%& \ \ \ \ \ \ \ \ \ \ \ 
n_0 \big(m_i\!-\!m_0\big)^\top\!V_0^{-1}\!\big(m_i\!-\!m_0\big) + n_0 \textrm{Tr}\big(V_i V_0^{-1}\big)
\bigg),
\label{app_eq:Ekl_final}
\end{align}
%%%%
where $\psi_d(a) = \sum_{j=1}^d \psi(a+(1-j)/2)$ is the multivariate digamma function, and $\psi(\cdot)$ is the digamma function. 

We begin with the definition of the KL divergence,
%%%%
\begin{align}
\mathbb{E}_{q(\phi)}\big[\textrm{KL}(q_i(\theta_i) || p(\theta_i|\phi))\big] = 
-\mathbb{H}(q_i(\theta_i)) + \mathbb{E}_{q(\phi)q_i(\theta_i)}[-\log p(\theta_i|\phi)],
\label{app_eq:EKL_first}
\end{align}
%%%%
where the first term is the negative entropy which admits a closed form due to Gaussian $q_i(\theta_i) = \mathcal{N}(\theta_i; m_i, V_i)$, %as follows:
%%%%
\begin{align}
-\mathbb{H}(q_i(\theta_i)) = -\frac{d}{2} \log (2 \pi e) - \frac{1}{2} \log |V_i|.
\end{align}
%%%%
Next we expand the second term of (\ref{app_eq:EKL_first}) using $p(\theta_i|\phi) = \mathcal{N}(\theta_i; \mu, \Sigma)$ as follows:
%%%%
\begin{align}
\mathbb{E}_{q(\phi)q_i(\theta_i)}[-\log p(\theta_i|\phi)] = 
\underbrace{\frac{1}{2} \mathbb{E}_{q(\phi)}
\big[\log |\Sigma|\big]}_{=:T_1} + \underbrace{\frac{1}{2} \mathbb{E}_{q(\phi)q_i(\theta_i)}\big[(\theta_i-\mu)^\top \Sigma^{-1}(\theta_i-\mu)\big]}_{=:T_2} + \frac{d}{2}\log (2\pi).
\end{align}
%%%%
Using the following facts from~\cite{bishop:2006:PRML,braun08}:
%%%%
\begin{align}
\mathbb{E}_{\mathcal{IW}(\Sigma; \Psi, \nu)}\log|\Sigma| \ &= \ -d\log 2 + \log |\Psi| - \psi_d(\nu/2)
 \label{app_eq:fact_1} \\
\mathbb{E}_{\mathcal{IW}(\Sigma; \Psi, \nu)}\Sigma^{-1} \ &= \ \nu\Psi^{-1}, \label{app_eq:fact_2}
\end{align}
%%%%
we can derive the two terms $T_1$ and $T_2$ as follows (Recall: $q(\phi) = \mathcal{N}(\mu; m_0, l_0^{-1}\Sigma) \cdot  \mathcal{IW}(\Sigma; V_0, n_0)$):
%%%%
\allowdisplaybreaks
\begin{align}
&(T_1=) \ \frac{1}{2} \mathbb{E}_{q(\phi)}
\big[\log |\Sigma|\big] \ = \ 
\frac{1}{2} \bigg( 
  -d \log 2 + \log |V_0|
  - \psi_d\Big(\frac{n_0}{2}\Big) \bigg) \\
&(T_2=) \ \frac{1}{2} \mathbb{E}_{q(\phi)q_i(\theta_i)}\big[(\theta_i-\mu)^\top \Sigma^{-1}(\theta_i-\mu)\big] \ = \ 
  \frac{1}{2} \mathbb{E}_{q(\phi)q_i(\theta_i)}\textrm{Tr}\Big( (\theta_i-\mu) (\theta_i-\mu)^\top \Sigma^{-1}\Big) \\
  & \ \ \ \ \ \ \ \ \ \ \ \ \ \ \ \ \ \ \ \ \ \ = \ \frac{1}{2} \textrm{Tr}\Big( \mathbb{E}_{q(\phi)}\Big[ \mathbb{E}_{q_i(\theta_i)}\big[(\theta_i-\mu) (\theta_i-\mu)^\top\big] \Sigma^{-1} \Big] \Big) \\
  & \ \ \ \ \ \ \ \ \ \ \ \ \ \ \ \ \ \ \ \ \ \ = \ \frac{1}{2} \textrm{Tr}\Big( \mathbb{E}_{q(\phi)}\Big[ \big( m_i m_i^\top - \mu m_i^\top -m_i\mu^\top + \mu \mu^\top + V_i \big) \Sigma^{-1} \Big] \Big) \\
  & \ \ \ \ \ \ \ \ \ \ \ \ \ \ \ \ \ \ \ \ \ \ = \ \frac{1}{2} \textrm{Tr}\Big( \mathbb{E}_{\mathcal{IW}(\Sigma; V_0, n_0)}\Big[ \mathbb{E}_{\mathcal{N}(\mu; m_0, l_0^{-1}\Sigma)} \big[ m_i m_i^\top - \mu m_i^\top -m_i\mu^\top + \mu \mu^\top + V_i \big] \Sigma^{-1} \Big] \Big) \\
  & \ \ \ \ \ \ \ \ \ \ \ \ \ \ \ \ \ \ \ \ \ \ = \ \frac{1}{2} \textrm{Tr}\Big( \mathbb{E}_{\mathcal{IW}(\Sigma; V_0, n_0)}\Big[ \big( m_i m_i^\top - m_0 m_i^\top -m_i m_0^\top + m_0 m_0^\top + l_0^{-1}\Sigma + V_i \big) \Sigma^{-1} \Big] \Big) \\
  & \ \ \ \ \ \ \ \ \ \ \ \ \ \ \ \ \ \ \ \ \ \ = \ \frac{1}{2} \textrm{Tr}\Big( 
  \frac{1}{l_0} I + 
  \big( (m_i-m_0)(m_i-m_0)^\top + V_i \big) n_0 V_0^{-1} \Big) \\ 
  & \ \ \ \ \ \ \ \ \ \ \ \ \ \ \ \ \ \ \ \ \ \ = \ \frac{1}{2} \bigg( 
  \frac{d}{l_0} + n_0 \big(m_i - m_0\big)^\top V_0^{-1} \big(m_i - m_0\big) + n_0 \textrm{Tr}\big(V_i V_0^{-1}\big) \bigg)
\end{align}
%%%%
Combining all the above results yields the formula (\ref{app_eq:Ekl_final}).

%%%%%%%%%%%%%%%%%%%%%%%%%%%%%%%%%%%%%%%%%%%%%%%%
\subsection{Derivation for Eq.~(11) from Eq.~(10)} 

Using the result (\ref{app_eq:Ekl_final}), we can easily show that the one-time episodic optimization Eq.~(10) in the main paper ((\ref{app_eq:ot_optim1}) below) reduces to Eq.~(11) ((\ref{app_eq:ot_optim2}) below). 
%%%%
\begin{gather}
% \begin{align}
\min_{L_i} \ \mathbb{E}_{q_i(\theta_i;L_i)}[l_i(\theta_i)] + \mathbb{E}_{q(\phi)} \big[\textrm{KL}(q_i(\theta_i;L_i) || p(\theta_i|\phi))\big]
\label{app_eq:ot_optim1} \\
% \end{align}
% \begin{align}
\min_{m_i,V_i} \ \mathbb{E}_{\mathcal{N}(\theta_i;m_i,V_i)}[l_i(\theta_i)] - \frac{1}{2} \log |V_i| + \frac{n_0}{2} (m_i-m_0)^\top V_0^{-1} (m_i-m_0) + \frac{n_0}{2} \textrm{Tr}\big(V_i V_0^{-1}\big) 
\label{app_eq:ot_optim2}
%\vspace{-0.5em}
% \end{align}
\end{gather}
%%%%
Recall that the optimization is with respect to $L_i=(m_i,V_i)$ with $L_0=\{m_0,V_0,l_0,n_0\}$ fixed.
Plugging (\ref{app_eq:Ekl_final}) into (\ref{app_eq:ot_optim1}) and removing the terms other than $(m_i,V_i)$ leads to (\ref{app_eq:ot_optim2}).

%%%%%%%%%%%%%%%%%%%%%%%%%%%%%%%%%%%%%%%%%%%%%%%%
\subsection{Derivation for Eq.~(13)}

For the quadratic approximation of $l_i(\theta_i) = -\log p(D_i|\theta_i) \approx \frac{1}{2}(\theta_i\!-\!\overline{m}_i)^\top \overline{A}_i(\theta_i\!-\!\overline{m}_i) + \textrm{const.}$, here we show that the minimizer of Eq.~(11) ((\ref{app_eq:ot_optim2}) above) can be obtained by the closed-form formula Eq.~(13) ((\ref{app_eq:miVi_star}) below).
%%%%
\begin{align}
m_i^*(L_0) = (\overline{A}_i + n_0 V_0^{-1})^{-1} (\overline{A}_i \overline{m}_i + n_0 V_0^{-1} m_0), \ \ \ \ \ \ \ \ 
V_i^*(L_0) = (\overline{A}_i + n_0 V_0^{-1})^{-1}.
\label{app_eq:miVi_star}
\end{align}
%%%%
%
By replacing $l_i(\theta_i)$ by the quadratic approximation, the expected loss term in Eq.~(11) or (\ref{app_eq:ot_optim2}) can be written as follows:
%%%%
\begin{align}
\mathbb{E}_{\mathcal{N}(\theta_i;m_i,V_i)}[l_i(\theta_i)] \ &\approx \
\mathbb{E}_{\mathcal{N}(\theta_i;m_i,V_i)}\Big[\frac{1}{2} (\theta_i-\overline{m}_i)^\top \overline{A}_i(\theta_i-\overline{m}_i) \Big] + \textrm{const.} \\
  &= \ \frac{1}{2} \Big( 
  \textrm{Tr}\big(\mathbb{E}[\theta\theta^\top] \overline{A}_i \big) - \overline{m}_i^\top \overline{A}_i m_i - m_i^\top \overline{A}_i \overline{m}_i + \overline{m}_i^\top \overline{A}_i \overline{m}_i 
  \Big) + \textrm{const.} \\
  &= \ \frac{1}{2} \Big( 
  \textrm{Tr}\big(V_i \overline{A}_i \big) + 
  m_i^\top \overline{A}_i m_i - \overline{m}_i^\top \overline{A}_i m_i - m_i^\top \overline{A}_i \overline{m}_i + \overline{m}_i^\top \overline{A}_i \overline{m}_i 
  \Big) + \textrm{const.} \\
  &= \ \frac{1}{2} \Big( 
  \textrm{Tr}\big(V_i \overline{A}_i \big) + 
  (m_i - \overline{m}_i)^\top \overline{A}_i (m_i - \overline{m}_i) \Big) + \textrm{const.}
\end{align}
%%%%
After plugging this back to (\ref{app_eq:ot_optim2}), we take the derivatives of the objective with respect to $m_i$ and $V_i$ and set them to $0$:
%%%%
\begin{gather}
\nabla_{m_i}(\cdot) \ = \ \overline{A}_i (m_i - \overline{m}_i) + n_0 V_0^{-1} (m_i-m_0) \ = \ 0
\\
\nabla_{V_i}(\cdot) \ = \ \frac{1}{2} \Big( 
  \overline{A}_i - V_i^{-1} + n_0 V_0^{-1}
\Big) \ = \ 0
\end{gather}
%%%%
The solution becomes Eq.~(13) or (\ref{app_eq:miVi_star}).

%%%%%%%%%%%%%%%%%%%%%%%%%%%%%%%%%%%%%%%%%%%%%%%%
\subsection{Derivation for Eq.~(14)}

It is quite straightforward that by plugging Eq.~(13) or (\ref{app_eq:miVi_star}) and also (\ref{app_eq:Ekl_final}) in Eq.~(9), we have our final optimization problem Eq.~(14) in the main paper. It is reiterated below:
%%%%
\begin{align}
&\min_{L_0} \ \mathbb{E}_{i\sim\mathcal{T}} \Big[ f_i(L_0) + \frac{1}{2} g_i(L_0) + \frac{d}{2 l_0} \Big] \ \ \textrm{s.t.} \label{app_eq:ultimate_optim} \\
& \ \ \ \ \ \ f_i(L_0) \ = \ \mathbb{E}_{\epsilon\sim \mathcal{N}(0,I)}\Big[l_i\Big(m_i^*(L_0)+V_i^*(L_0)^{1/2}\epsilon\Big)\Big], \\ %\label{eq:ultimate_optim_f} \\
& \ \ \ \ \ \ g_i(L_0) \ = \ \log \frac{|V_0|}{|V_i^*(L_0)|} + n_0 \textrm{Tr}\big(V_i^*(L_0)V_0^{-1}\big) \ + \ \ n_0 \big(m_i^*(L_0)\!-\!m_0\big)^\top V_0^{-1} \big(m_i^*(L_0)\!-\!m_0\big) - \psi_d\Big(\frac{n_0}{2}\Big),
\end{align}
%%%%

%%%%%%%%%%%%%%%%%%%%%%%%%%%%%%%%%%%%%%%%%%%%%%%%
\subsection{Formulas for Test-Time ELBO Optimization Eq.~(18)}

We provide formulas for the test-time ELBO in Eq.~(18) ((\ref{app_eq:test_elbo}) below). For the test-time variational density $v(\theta) = \mathcal{N}(\theta; m, V)$ to approximate $p(\theta|D^*,\phi^*)$ for test support data $D^*$ and learned $\phi^*=(\mu^*\!=\!m_0,\Sigma^*\!=\!V_0/(n_0\!+\!d\!+\!2))$, we had 
%%%%
\begin{align}
\min_{m,V} \  \mathbb{E}_{v(\theta)}[-\log p(D^*|\theta)] + \textrm{KL}(v(\theta)||p(\theta|\phi^*)).
\label{app_eq:test_elbo}
\end{align}
%%%%
Using the closed-form Gaussian KL divergence and the reparametrized sampling trick, we can express (\ref{app_eq:test_elbo}) as:
%%%%
\begin{align}
\min_{m,V} \  \mathbb{E}_{\epsilon\sim\mathcal{N}(0,I)}\big[-\log p\big(D^*|m+V^{1/2}\epsilon\big)\big] - \frac{1}{2} \log |V|
+ \frac{n_0\!+\!d\!+\!2}{2} \Big(
  \textrm{Tr}\big(V_0^{-1}V\big) + (m-m_0)^\top V_0^{-1} (m-m_0) \Big).
\label{app_eq:test_elbo_formula}
\end{align}
%%%%
Also, our meta-test prediction algorithm is summarized as a pseudo code in Alg.~\ref{app_alg:test}.

% \newcommand\inlineeqno{\stepcounter{equation}\ (\theequation)}
%%%%
\begin{algorithm}[t!]
%\vspace{-0.4em}
  \caption{Meta-test prediction algorithm.}
  \label{app_alg:test}
%\begin{footnotesize}
\begin{small}
\begin{algorithmic}
   %\STATE {\bfseries Input:} Learning rate $\eta$, SGLD hyperparameters
   \STATE {\bfseries Input:} Test support data $D^*$ and learned $q(\phi;L_0)$ where $L_0=\{m_0,V_0,n_0\}$. \\ \ \ \ \ \ \ \ \ \ \ \ \ $M_V=$ number of test-time variational inference steps. \\
   \ \ \ \ \ \ \ \ \ \ \ \ $M_S=$ number of test-time model samples.
   %\STATE {\bfseries Output:} 
   \STATE Compute the mode $\phi^*=(\mu^*\!=\!m_0,\Sigma^*\!=\!V_0/(n_0\!+\!d\!+\!2))$.
   \STATE Initialize $(m,V)$ with $(\mu^*,\Sigma^*)$.
   \FOR{$i=1,\dots,M_V$} 
      \STATE Take a gradient descent update for $(m,V)$ with the objective in (\ref{app_eq:test_elbo_formula}).
   \ENDFOR
   \STATE Sample $\theta^{(s)} \sim \mathcal{N}(\theta; m, V)$ for $s=1,\dots,M_S$.
   \STATE {\bfseries Output:} Sample-averaged predictive distribution, $p(y^*|x^*,D^*,D_{1:\infty}) \approx 
\frac{1}{S} \sum_{s=1}^{M_S} p(y^*|x^*,\theta^{(s)})$.
\end{algorithmic}
\end{small}
%\end{footnotesize}
%\vspace{-0.1em}
\end{algorithm}
%%%%

%%%%%%%%%%%%%%%%%%%%%%%%%%%%%%%%%%%%%%%%%%%%%%%%%%%%%%%%%%%%%%%%%%%%%%%%%%%%%%%
%%%%%%%%%%%%%%%%%%%%%%%%%%%%%%%%%%%%%%%%%%%%%%%%%%%%%%%%%%%%%%%%%%%%%%%%%%%%%%%
% \section{Time Complexity Analysis}\label{app_sec:time_complexity}

%%%%%%%%%%%%%%%%%%%%%%%%%%%%%%%%%%%%%%%%%%%%%%%%%%%%%%%%%%%%%%%%%%%%%%%%%%%%%%%
%%%%%%%%%%%%%%%%%%%%%%%%%%%%%%%%%%%%%%%%%%%%%%%%%%%%%%%%%%%%%%%%%%%%%%%%%%%%%%%
\section{Implementation Details and Experimental Settings}\label{app_sec:expmt_details}

We implement our NIW-Meta using  PyTorch~\cite{paszke2017automatic} and the Higher~\cite{higher}\footnote{\url{https://github.com/facebookresearch/higher}} library. The latter makes the implementation of the backpropagation through the functional network weights in PyTorch modules very easy. 
Real codes for the synthetic SineLine regression dataset and the large-scale ViT are also provided in the Supplement to help understanding of our algorithm.  For all few-shot classification experiments, we use the ProtoNet-like parameter-free NCC head in our NIW-Meta. 
Some important implementation details on the SGLD iterations for quadratic approximation of the one-time episode optimization include: we have either 3 steps without burn-in (for large-scale backbones ViT) or 5 steps with 2 burn-in steps (for smaller backbones ConvNet, ResNet-18, and CNP). Before starting SGLD iterations, the network is initialized with the current model parameters $m_0$. For reliable variance estimation of $\overline{A}_i$, a small regularizer is added to the diagonal entries of the variances.

For the standard benchmarks with ConvNet/ResNet backbones, we follow the standard protocols of~\cite{simpleshot,s2m2,metaqda}: With 64/16/20 and 391/97/160 train/validation/test class splits for {\em mini}ImageNet and {\em tiered}ImageNet datasets, respectively, the images are resized to 84 pixels. 
We initialize the $m_0$ parameters from the pretrained models: checkpoints from~\cite{simpleshot} for Conv-4 and ResNet-18 and checkpoints from~\cite{s2m2} for WRN-28-10. 
With the stochastic gradient descent (SGD) optimizer, we set momentum 0.9, weight decay 0.0001, and initial learning rate 0.01 for {\em mini}ImageNet and 0.001 for {\em tiered}ImageNet. We have learning rate schedule by reducing the learning rate by the factor of 0.1 at epoch 70.

For the large-scale ViT backbones, we utilize the code base from~\cite{hu2022pmf}. We use the self-supervised pretrained checkpoints from~\cite{caron2021emerging} to initialize the $m_0$ parameters. The CIFAR-FS dataset is formed by splitting the original CIFAR-100 into 64/16/20 train/validation/test classes. For training, we run 100 epochs, each epoch comprised of 2000 episodes. We follow the same warm-up plus cosine annealing learning rate
scheduling as~\cite{hu2022pmf}. For test evaluation, we have 600 episodes from the test splits. 

For the few-shot regression experiments with ShapeNet datasets, we basically follow all experimental settings and CNP/ANP network architectures from~\cite{Gao_2022_CVPR}. For instance, in the ShapeNet-1D dataset, we run our algorithm for $500K$ iterations with learning rate $10^{-4}$ where each batch iteration consists of 10 episodes. The CNP backbone, for instance, in the Distractor dataset case, has a ResNet image encoder and a linear target encoder, where the concatenated instance-wise embeddings then go through a three-layer fully connected network followed by max pooling. The decoder has a similar architecture and converts the support set embedding and a query image into a target label. 
For the conv-net plus ridge-regression head backbone (C$+$R) tested for our method, the conv-net feature extractors are formed by taking the encoder parts of the CNP architectures in~\cite{Gao_2022_CVPR} while discarding the pooling operations and decoders. Also the ridge-regression L2 regularization coefficient is set to $\lambda=1.0$ for all datasets.

%%%%%%%%%%%%%%%%%%%%%%%%%%%%%%%%%%%%%%%%%%%%%%%%%%%%%%
\subsection{Computational Complexity}\label{app_sec:complexity}

%%%% 
\begin{table}%[t!]
%\setlength{\tabcolsep}{3.55pt}
%\vspace{-3.5em}
\centering
%\begin{scriptsize}
%\begin{footnotesize}
\begin{small}
%\begin{sc}
\centering
\begin{tabular}{ccc}
\toprule
& Training time & Test time \\
\hline
\multirow{2}{*}{NIW-Meta}\Tstrut & $(F_S\!+\!F_Q\!+\!B_Q)\cdot (M_{L}\!+\!1)$ & $(F_S\!+\!B_S)\cdot M_{V} \ +$ \\ 
& $+\ O(d)$ & $(F_S\!+\!F_Q)\cdot M_{S} + O(d)$ \\ 
\hline
ProtoNet\Tstrut & $F_S\!+\!F_Q\!+\!B_Q$ & $F_S\!+\!F_Q$ \\ 
\bottomrule
\end{tabular}
%\end{sc}
%\end{footnotesize}
%\end{scriptsize}
\end{small}
\vspace{+0.5em}
\caption{(Per-episode) Time complexity of our NIW-Meta vs.~ProtoNet. 
%All quantities are per-episode costs. 
We denote by $F_D$ and $B_D$ the forward-pass and backpropagation times with data $D=S$upport or $Q$uery. In our algorithm, $M_{L}$, $M_{V}$, and $M_{S}$ indicate the numbers of SGLD iterations, test-time variational inference steps for Eq.~(18) or (\ref{app_eq:test_elbo},\ref{app_eq:test_elbo_formula}), and test-time model samples $\theta^{(s)}$, respectively. The costs required for reparametrized sampling in model space and regularizer computation in Eq.~(14) or (\ref{app_eq:ultimate_optim}) are denoted by $O(d)$ where $d=$ number of backbone parameters. 
}
\label{app_tab:complexity}
%\vspace{-1.0em}
\end{table}
%%%%

In this section we analyze the computational complexity of the proposed algorithm NIW-Meta. 
First, we analyze the time complexity and contrast it with that of ProtoNet~\cite{protonet}. For fair comparison, our approach adopts the same NCC head on top of the feature space as ProtoNet. The result is summarized in Table~\ref{app_tab:complexity}. Despite seemingly increased complexity in the training/test algorithms, our method incurs only constant-factor overhead compared to the minimal-cost ProtoNet. 

%%%%
\begin{figure*}[t!]


%\vspace{-3.5em}
\begin{center}
%
%\begin{subfigure}[b]{0.9\textwidth}
\centering
\includegraphics[trim = 1mm 2mm 5mm 4mm, clip, scale=0.285%0.235%0.285
]{figs/mini_memory_footprint.pdf}
%\ \ 
%
\includegraphics[trim = 1mm 2mm 5mm 4mm, clip, scale=0.285%0.235%0.285
]{figs/shapenet1d_memory_footprint.pdf}
\ \ \ \ \ \ 
\includegraphics[trim = 1mm 2mm 5mm 4mm, clip, scale=0.285%0.235%0.285
]{figs/mini_train_time.pdf}
%\ \ 
%
\includegraphics[trim = 1mm 2mm 5mm 4mm, clip, scale=0.285%0.235%0.285
]{figs/shapenet1d_train_time.pdf}
\vspace{-1.5em}
\\ \ \ \ (a) GPU memory footprints \ \ \ \ \ \ \ \ \ \ \ \ \ \ \ \ \ \ \ \ \ \ \ \ \ \ \ \ \ \ \ \ \ \ \ \ \ \ \ \ \ \ \ \ \ \ \ \ \ \ (b) Per-episode training times
\end{center}
\vspace{-0.8em}
\caption{Computational complexity of MAML~\cite{maml} and our NIW-Meta. (a) GPU memory footprints (in MB) for a single batch. (b) Per-episode training times (in milliseconds). We use the ResNEt-18 backbone for {\em mini}ImageNet in 1-shot classification and the conv-net backbone for ShapeNet-1D regression (10 episodes per batch). 
}
%\vspace{-1.0em}
\label{app_fig:complexity}
\end{figure*}
%%%%

As we claimed in the main paper, one of the main drawbacks of MAML~\cite{maml} is the computational overhead to keep track of a large computational graph for inner gradient descent steps. Unlike MAML, our NIW-Meta has a much more efficient episodic optimization strategy, i.e., our one-time optimization only computes the (constant) first/second-order moment statistics of the episodic loss function without storing the full optimization trace. 

To verify this, we measure and compare the memory footprints and running times of MAML and NIW-Meta on two real-world classification/regression datasets: {\em mini}ImageNet 1-shot with the ResNet-18 backbone and ShapeNet-1D with the conv-net backbone. The results in Fig.~\ref{app_fig:complexity} show that NIW-Meta has far lower memory requirement than MAML (even smaller than 1-inner-step MAML) while MAML suffers from heavy use of memory space, nearly linearly  increasing as the number of inner steps. The running times of our NIW-Meta are not prohibitively larger compared to MAML where the main computational bottleneck is the SGLD iterations for quadratic approximation of the one-time episodic optimization. We tested two scenarios %which perform equally well 
with the number of SGLD iterations 2 and 5, and we have nearly the same (or even better) training speed as the 1-inner-step MAML.

\end{document}